\documentclass{article}

\usepackage[preprint]{neurips_2026}

\usepackage[utf8]{inputenc} 
\usepackage[T1]{fontenc}    
\usepackage{hyperref}       
\usepackage{url}            
\usepackage{booktabs}       
\usepackage{amsfonts}       
\usepackage{nicefrac}       
\usepackage{microtype}      
\usepackage{xcolor}         
\usepackage{graphicx}
\usepackage{amsmath}
\usepackage{amssymb}
\usepackage{amsthm}
\usepackage{enumitem}
\usepackage{wrapfig}
\usepackage{multirow}
\usepackage{booktabs}
\usepackage{caption}
\usepackage{algorithm,algpseudocode}
\usepackage{booktabs}
  
\newtheorem{theorem}{Theorem}[section]

\newtheorem{lemma}[theorem]{Lemma}
\newtheorem{assumption}[theorem]{Assumption}
\newtheorem{remark}[theorem]{Remark}

\title{Variance-Optimal State Resampling for Reinforcement Learning with Verifiable Rewards}

\author{%
  Yibo Wang \qquad Yuanxin Liu \qquad Zhaoran Wang \\
  Northwestern University \\
  \texttt{\{yibowang2030, yuanxinliu2029\}@u.northwestern.edu}\\
  \texttt{zhaoranwang@gmail.com}
}

\begin{document}

\maketitle
\begin{abstract}
Reinforcement learning with verifiable rewards spends much of its rollout budget on states that yield no learning signal. We recast this as a question about the sampling distribution itself: under a fixed rollout budget, which distribution over states minimizes the variance of the policy-gradient estimator? The answer admits a closed-form Neyman allocation, $\mu_{\mathrm{var}}^\pi(s) \propto d^\pi(s)\,\sigma_\pi(s)$, which connects naturally to GRPO's group-relative advantage and induces an emergent reverse curriculum as the policy improves, with no externally imposed schedule. Building on this principle, we propose VSR (Variance-driven State Resampling), a plug-in modification to GRPO using a prefix replay buffer and a zero-cost variance proxy. On mathematical reasoning benchmarks and long-horizon agentic tasks (Sokoban, BabyAI), VSR outperforms GRPO and GiGPO on most benchmarks with fewer generated tokens per update.
\end{abstract}

\section{Introduction}
\label{sec:intro}
Training large language models with reinforcement learning from verifiable rewards has become a standard recipe for improving reasoning and agentic capabilities. In practice, however, a non-trivial fraction of training steps contribute little signal: in GRPO-style training, rollout groups that return uniform rewards—all-fail or all-success—yield a collapsed group-relative advantage and a near-zero update.

\begin{wrapfigure}{r}{0.5\textwidth}
  \vspace{-0.8em}
  \centering
  \includegraphics[width=1\linewidth]{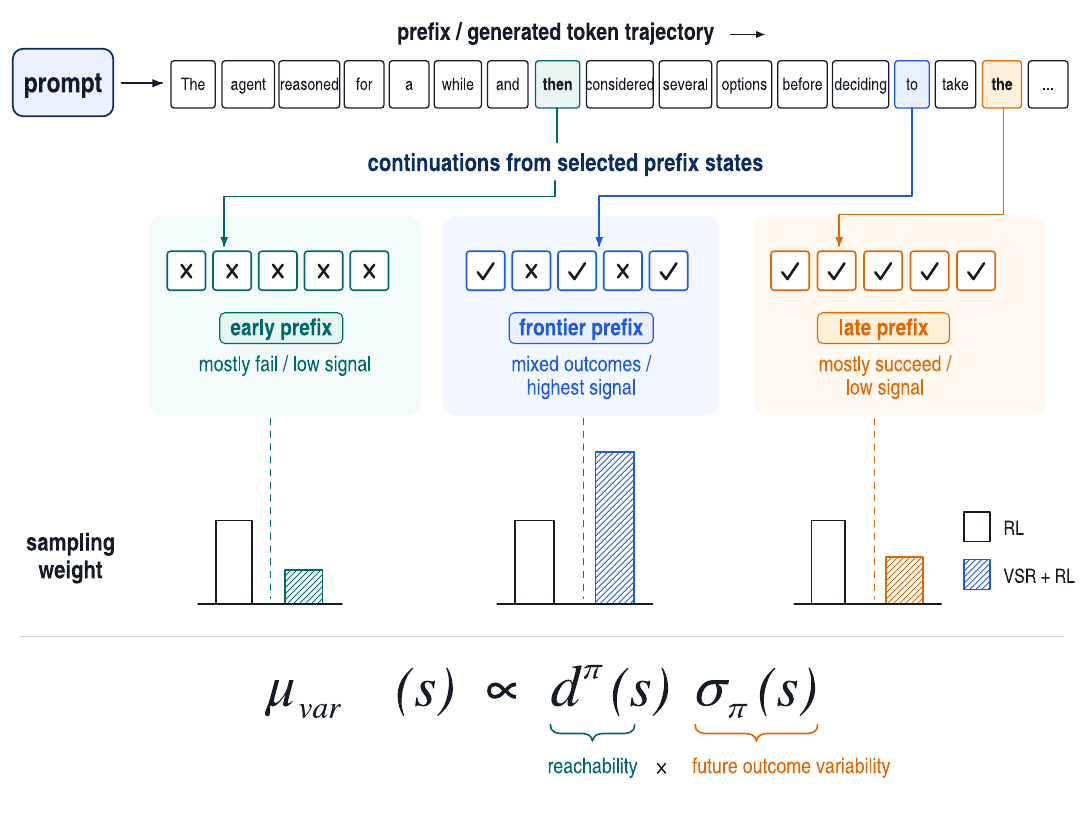}
    \caption{Continuations from early prefixes mostly fail and from late prefixes mostly succeed; both yield low signal. VSR samples states in proportion to $d^\pi(s)\,\sigma_\pi(s)$ to capture the highest signal.}
    \label{fig:first_big_graph}
  \vspace{-1.0em}
\end{wrapfigure}

It can be helpful to look at this pathology at the level of state distribution, rather than prompt difficulty. Even on hard problems, not all intermediate states are equally uninformative. Consider a competition math problem that the current policy rarely solves from the prompt alone. Once the policy has produced a partial chain of thought that performs a key algebraic reduction, continuations from that prefix tend to succeed with noticeably higher probability, and different continuations produce genuinely mixed outcomes. A similar pattern shows up in long-horizon agentic tasks: from the initial query, a successful trajectory is uncommon, but once the agent has identified the relevant information source through a few tool calls, the remaining decisions can lead to meaningfully different outcomes. In both cases, the trajectory passes through intermediate states with rich gradient signals, although rollouts generated from the prompt alone rarely reach it. The challenge is that such states tend to be visited with low probability under the current policy's state distribution $d^\pi$, so finite-sample training encounters them only occasionally.

Therefore, a complementary question which seems to have received comparatively less direct attention is: \textit{Under a fixed rollout budget, which state distribution minimizes the variance of the policy-gradient estimator?}


In this paper we take up that question. Under a gradient uniformity assumption, we find that the answer takes a closed form with Neyman-allocation structure,
\begin{equation}
\mu_{\mathrm{var}}^\pi(s) \propto d^\pi(s)\sigma_\pi(s).
\end{equation}
This proposal allocates training mass by the product of on-policy relevance and residual signal magnitude, and it evolves with the policy, giving rise to an emergent curriculum without a hand-designed difficulty schedule.

 Implementing $\mu_{\mathrm{var}}^\pi$ with exact proposal correction would involve an inverse-$\sigma_\pi$ factor. GRPO’s group-relative normalization provides a practical analogue of this factor by normalizing rewards with an empirical within-group standard deviation. We instantiate this idea as \textbf{VSR (Variance-driven State Resampling)}, a practical plug-in modification to GRPO-style algorithms that approximates the variance-optimal proposal. As a tractable algorithmic realization of the allocation principle, VSR uses a recent prefix replay buffer as a density-free surrogate for the current state-visitation distribution and a cheap proxy for $\sigma_\pi$ to prioritize replayed prefixes. Experiments on math reasoning and long-horizon agentic tasks demonstrate the effectiveness of this approximation.

\paragraph{Contributions.}
\begin{itemize}
    \item We formulate the uniform-reward pathology in GRPO-style training as a variance-reduction problem over the state sampling distribution, and derive a closed-form $\mu_{\mathrm{var}}^\pi \propto d^\pi(s)\,\sigma_\pi(s)$, which matches the form of GRPO's group-relative advantage and induces an emergent reverse curriculum as the policy improves.
    
    \item Building on this principle, we propose VSR (Variance-driven State Resampling), a plug-in practical modification to GRPO that approximates $\mu_{\mathrm{var}}^\pi \propto d^\pi(s)\,\sigma_\pi(s)$ through a state replay buffer and a zero-cost variance proxy.
    
    \item On hard mathematical reasoning benchmarks and long-horizon agentic tasks, VSR outperforms GRPO and GiGPO baselines with fewer generated tokens per update, and ablations isolate the contributions of variance-driven scoring and recency-based buffer updates.
\end{itemize}

\section{The variance-driven state proposal}
\subsection{Setup}
\label{sec:setup}

A natural starting question for any policy-gradient method is: what distribution is training actually averaging over? For general RL the answer is the state visitation measure $d^\pi$, and the policy gradient takes the form
\begin{equation}
    \nabla J(\pi) \;=\; \mathbb{E}_{s \sim d^\pi,\, a \sim \pi(\cdot \mid s)}\!\left[\nabla \log \pi(a \mid s)\, \tilde{r}(s, a)\right], \quad d^\pi(s) \;\propto\; \sum_{t=0}^{T-1} \Pr\bigl(s_t = s \,\big|\, s_0 \sim \mathcal{D},\, \pi\bigr),
\end{equation}
where $\tilde{r}(s, a) = r(s, a) - b(s)$ for a state-dependent baseline $b$. The expectation is taken with respect to $d^\pi$, which is therefore the distribution under which gradient signal is sampled.

LLM RL with verifiable rewards is most often presented as a contextual bandit under which the only input-side distribution is the prompt distribution $\mathcal{D}$. This framing is coarse relative to how generation actually proceeds: a single rollout on a reasoning problem spans thousands of tokens and passes through qualitatively distinct regimes such as restatement, algebraic reduction, and reflection, while on long-horizon agentic tasks a trajectory decomposes into a sequence of consequential intermediate decisions. We therefore adopt the standard state-action view throughout, in which
\begin{equation}
    s_0 = x \sim \mathcal{D}, \qquad s_t = (x, y_{<t}), \qquad a_t = y_t,  \qquad \tau = (s_0, a_0, s_1, a_1, \ldots, s_T).
\end{equation}
This is the same structure realized by the triangular attention mask inside the transformer itself. Under this instantiation, $d^\pi$ is a marginal over all intermediate contexts the policy passes through, not over prompts alone.


\subsection{The sampling proposal as a variance-reduction lever}
\label{sec:proposal}

Reducing the variance of policy-gradient estimators is a long-standing concern, and most of the work has gone into the gradient baseline $b(s)$ \citep{greensmith2004variance}. The proposal $\mu$ from which states are sampled has not, by comparison, been treated as something to be optimized: it is set to $d^\pi$ and any deviation is handled as an off-policy correction problem rather than as a variance-reduction choice. This section asks whether $\mu$ can be optimized alongside the baseline to minimize the variance.

The standard policy-gradient estimator with a state-dependent baseline takes
the form
\begin{equation}
\label{eq:pg-baseline}
\widehat{\nabla J}_b(\pi) \;=\; \big(r(s,a)-b(s)\big)\,\nabla\log\pi(a\mid s),
  \qquad s\sim d^\pi,\ a\sim\pi(\cdot\mid s),
\end{equation}
which is unbiased for $\nabla J(\pi)$ for every state-dependent $b$.
For trace-covariance minimization, however, the optimal baseline generally depends on the action-gradient norm. Writing $g(s,a):=\nabla\log\pi(a\mid s)$ and $H(s,a):=\|g(s,a)\|^2$, the statewise optimum is
\begin{equation}
\label{eq:norm-weighted-baseline-main}
b_w^*(s)=\frac{\mathbb{E}[r(s,a)H(s,a)\mid s]}{\mathbb{E}[H(s,a)\mid s]}.
\end{equation}
It reduces to the conditional mean $\bar r(s):=\mathbb{E}[r(s,a)\mid s]$ under the within-state conditional-moment condition stated below.

Crucially, unbiasedness of~\eqref{eq:pg-baseline} does not require sampling
from $d^\pi$. For any proposal $\mu$ with $\mathrm{supp}(\mu)\supseteq\mathrm{supp}(d^\pi)$,
the importance sampling estimator
\begin{equation}
\label{eq:pg-proposal}
\widehat{\nabla J}_{b,\mu}(\pi) \;=\; \frac{d^\pi(s)}{\mu(s)}\big(r(s,a)-b(s)\big)\,\nabla\log\pi(a\mid s),
  \qquad s\sim\mu,\ a\sim\pi(\cdot\mid s),
\end{equation}
is unbiased for every admissible pair $(b,\mu)$. The on-policy choice
$\mu=d^\pi$ is the operational default in policy-gradient methods, but it is
not, in general, the variance-minimizing choice. The reason is that states
contribute different amounts of gradient signal per sample: a state whose continuations almost always succeed or fail yields little reward variation after baseline subtraction. A state whose continuations
produce mixed outcomes yields per-sample signal proportional to that
variation. Sampling states by visitation frequency alone does not reflect
this difference.

We therefore treat $\mu$, alongside $b$, as a variable to be optimized.
Minimizing the trace covariance of~\eqref{eq:pg-proposal} jointly over
$(b,\mu)$ admits a closed-form solution.

\noindent

\begin{theorem}[Joint-optimal baseline and proposal; informal]
\label{thm:main-informal}
Let $H(s,a)=\|\nabla\log\pi(a\mid s)\|^2$ and let $b_w^*(s)$ be given by~\eqref{eq:norm-weighted-baseline-main}. The general joint minimizer of the trace covariance of~\eqref{eq:pg-proposal} is
\begin{equation}
\label{eq:joint-opt-main-general}
\mu^\star(s) \;\propto\; d^\pi(s)\sqrt{M_w^*(s)},
\qquad
M_w^*(s):=\mathbb{E}\!\left[(r(s,a)-b_w^*(s))^2H(s,a)\mid s\right].
\end{equation}
If $\mathbb{E}[H\mid s,r]=G(s)$ is independent of the realized reward within each state, then $b_w^*(s)=\bar r(s)$ and
\begin{equation}
\label{eq:joint-opt-main-state}
\mu^\star(s)\propto d^\pi(s)\sigma_\pi(s)\sqrt{G(s)}.
\end{equation}
Under the additional across-state homogeneity condition $G(s)\equiv G_0$, this reduces to
\begin{equation}
\label{eq:joint-opt-main}
\mu^\star(s) \;\propto\; d^\pi(s)\,\sigma_\pi(s) \;=:\; \mu^\pi_{\mathrm{var}}(s),
\end{equation}
where $\sigma_\pi(s):=\sqrt{\mathrm{Var}_{a\sim\pi(\cdot\mid s)}(r(s,a))}$. On its effective support, the corresponding proposal-corrected estimator takes the standardized-residual form
\begin{equation}
\label{eq:joint-opt-estimator-main}
\widehat{\nabla J}^{\,\star}(\pi) \;\propto\; \frac{r(s,a)-\bar r(s)}{\sigma_\pi(s)}\,\nabla\log\pi(a\mid s),
\qquad s\sim\mu^\pi_{\mathrm{var}},\ a\sim\pi(\cdot\mid s).
\end{equation}
\end{theorem}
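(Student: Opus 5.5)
The plan is to decompose the trace covariance of~\eqref{eq:pg-proposal} into a term that does not depend on $(b,\mu)$ and a second-moment term that does, and then minimize the latter in two nested stages: first over $b$ pointwise in $s$ with $\mu$ fixed, then over $\mu$ on the probability simplex. Because the estimator is unbiased for every admissible pair, its mean $\nabla J(\pi)$ is fixed. So $\mathrm{tr}\,\mathrm{Cov} = \mathbb{E}_{\mu}\|\widehat{\nabla J}_{b,\mu}\|^2 - \|\nabla J\|^2$, and it suffices to minimize
\begin{equation*}
\mathbb{E}_{s\sim\mu,\,a\sim\pi}\Bigl[\tfrac{d^\pi(s)^2}{\mu(s)^2}(r-b(s))^2H(s,a)\Bigr] \;=\; \sum_s \frac{d^\pi(s)^2}{\mu(s)}\,M_b(s),
\qquad M_b(s):=\mathbb{E}[(r-b(s))^2H\mid s].
\end{equation*}

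For the baseline step, $\mu$ enters only through the positive weight $d^\pi(s)^2/\mu(s)$. Each $M_b(s)$ is therefore minimized separately, as a quadratic in the scalar $b(s)$. Setting the derivative to zero gives $b_w^*(s)$ as in~\eqref{eq:norm-weighted-baseline-main}, provided $\mathbb{E}[H\mid s]>0$; on states where $H\equiv 0$ the choice is irrelevant. The minimal value is $M_w^*(s)$, and since this does not depend on $\mu$, the joint problem reduces to minimizing $\sum_s d^\pi(s)^2M_w^*(s)/\mu(s)$ over $\mu$.

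For the proposal step, Cauchy--Schwarz gives
\begin{equation*}
\Bigl(\sum_s d^\pi(s)\sqrt{M_w^*(s)}\Bigr)^2 \le \Bigl(\sum_s \tfrac{d^\pi(s)^2M_w^*(s)}{\mu(s)}\Bigr)\Bigl(\sum_s\mu(s)\Bigr),
\end{equation*}
with equality iff $\mu(s)\propto d^\pi(s)\sqrt{M_w^*(s)}$. A Lagrange multiplier argument gives the same answer. This yields~\eqref{eq:joint-opt-main-general}.

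The point I expect to need the most care is admissibility of this $\mu^\star$. The support condition $\mathrm{supp}(\mu)\supseteq\mathrm{supp}(d^\pi)$ fails exactly where $M_w^*(s)=0$. On those states the integrand of the estimator vanishes almost surely, since $(r-b_w^*)^2H=0$ a.s. I would handle this by noting that such states contribute zero to the gradient under $b_w^*$. Unbiasedness then only requires support on the effective support $\{M_w^*>0\}$, which is what the statement's phrase ``on its effective support'' refers to. Alternatively, one can take $\mu_\epsilon$ mixing in $\epsilon d^\pi$ and let $\epsilon\to 0$ to show $\mu^\star$ attains the infimum.

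The specializations follow directly. If $\mathbb{E}[H\mid s,r]=G(s)$, the tower property gives $\mathbb{E}[rH\mid s]=G(s)\bar r(s)$ and $\mathbb{E}[H\mid s]=G(s)$, so $b_w^*=\bar r$. It also gives $M_w^*(s)=G(s)\,\mathbb{E}[(r-\bar r)^2\mid s]=G(s)\sigma_\pi(s)^2$, which yields~\eqref{eq:joint-opt-main-state}. Setting $G\equiv G_0$ and absorbing the constant gives~\eqref{eq:joint-opt-main}. Finally, substituting $\mu^\pi_{\mathrm{var}}$ into~\eqref{eq:pg-proposal} gives the weight $d^\pi/\mu^\pi_{\mathrm{var}} = Z/\sigma_\pi(s)$ with normalizer $Z=\sum_s d^\pi\sigma_\pi$. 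The estimator thus becomes $Z\,(r-\bar r)/\sigma_\pi\cdot\nabla\log\pi$, which is~\eqref{eq:joint-opt-estimator-main}; this is well defined since $\sigma_\pi>0$ on the effective support.
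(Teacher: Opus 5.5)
Your proof is correct and follows the paper's own route: the variance identity of Lemma~\ref{lem:V-form}, statewise quadratic minimization giving $b_w^*$, Cauchy--Schwarz (Neyman allocation) for $\mu$, and the tower property for the two specializations. Your admissibility argument is more explicit than the paper's: you note that $M_w^*(s)=0$ forces $(r-b_w^*)\nabla\log\pi=0$ almost surely, so those states contribute nothing to $\nabla J$, whereas the paper only asserts the result ``on the effective support'' and mentions smoothing.
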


\paragraph{Specialization to the GRPO form.}

Under the two conditions above, the exact proposal-corrected estimator~\eqref{eq:joint-opt-estimator-main} has the same standardized reward-residual form as GRPO's group-relative advantage~\citep{shao2024deepseekmath}. This connection concerns the form of the normalization: standard GRPO does not sample states from $\mu^\pi_{\mathrm{var}}$ or apply the exact state-density ratio $d^\pi/\mu$. A formal
statement, with an explicit expression for the variance gap, is given in
Theorem~\ref{thm:clean}.

\paragraph{Beyond the regularity assumption.}

The two conditions that yield $\mu^\pi_{\mathrm{var}}\propto d^\pi\sigma_\pi$ are idealizations. In practice, action-gradient norms can depend on the realized reward within a state and their conditional scale can vary across states, so $\mu^\pi_{\mathrm{var}}$ need not coincide with the exact joint optimum. We make this precise in Appendix~\ref{app:weak}
(Assumption~\ref{ass:G-prime} and Theorem~\ref{thm:weak}). That result is a proposal comparison under the stated regularity condition, not an unrestricted joint-optimality result.

\paragraph{Analysis of an idealized theoretical algorithm.}
 Separately from practical VSR, Appendix~\ref{app:svmd} analyzes an idealized smoothed state-reweighting algorithm under oracle-score, coverage, critic-realizability, and exact-weighting assumptions. Consider sampling critic data from a
smoothed version of $d^\pi\sigma_\pi$ at every iteration and updating the
policy by KL-regularized mirror ascent. For any comparator policy $\mu$ covered by every
iterate, the resulting theoretical algorithm satisfies
\[
J(\mu) - J(\bar\pi_T) \;\leq\; \widetilde{O}\!\left(
\sqrt{\frac{\mathrm{KL}(\mu\,\|\,\pi_1)}{T}}
\;+\; \sqrt{\frac{H^2}{\varepsilon N}}\right),
\]
combining a $1/\sqrt{T}$ optimization rate with a critic-estimation term
inflated only by the smoothing factor $1/\varepsilon$. The full statement,
assumptions, and proof are deferred to Appendix~\ref{app:svmd}.

\subsection{Structure of the variance-optimal proposal}
\label{sec:proposal:shape}

\textbf{$\mu_{\mathrm{var}}^\pi$ reweights the effective support of $d^\pi$.} The unsmoothed proposal assigns zero mass to states with $\sigma_\pi(s)=0$, i.e., states with no centered reward-variance signal, and therefore need not preserve the full support of $d^\pi$. When full support is required, one may use the smoothed proposal $\mu_\varepsilon=(1-\varepsilon)\mu_{\mathrm{var}}+\varepsilon d^\pi$, which retains an on-policy component while preserving the same variance-driven preference on the remaining mass.

\textbf{$\mu_{\mathrm{var}}^\pi$ trades off on-policy relevance and residual signal.} The two factors capture distinct and complementary quantities: $d^\pi(s)$ measures how relevant a state is to the current policy's behavior, penalizing distributional shift away from on-policy sampling, while $\sigma_\pi(s)$ measures how much per-sample gradient signal a state carries, penalizing states whose outcomes are already deterministic. Their product is the Neyman allocation that minimizes estimator variance---what is upweighted is the intersection of states that are both reachable under $\pi$ and informative about $\pi$'s residual errors.

\begin{figure}[h]
    \centering
    \includegraphics[width=0.8\linewidth]{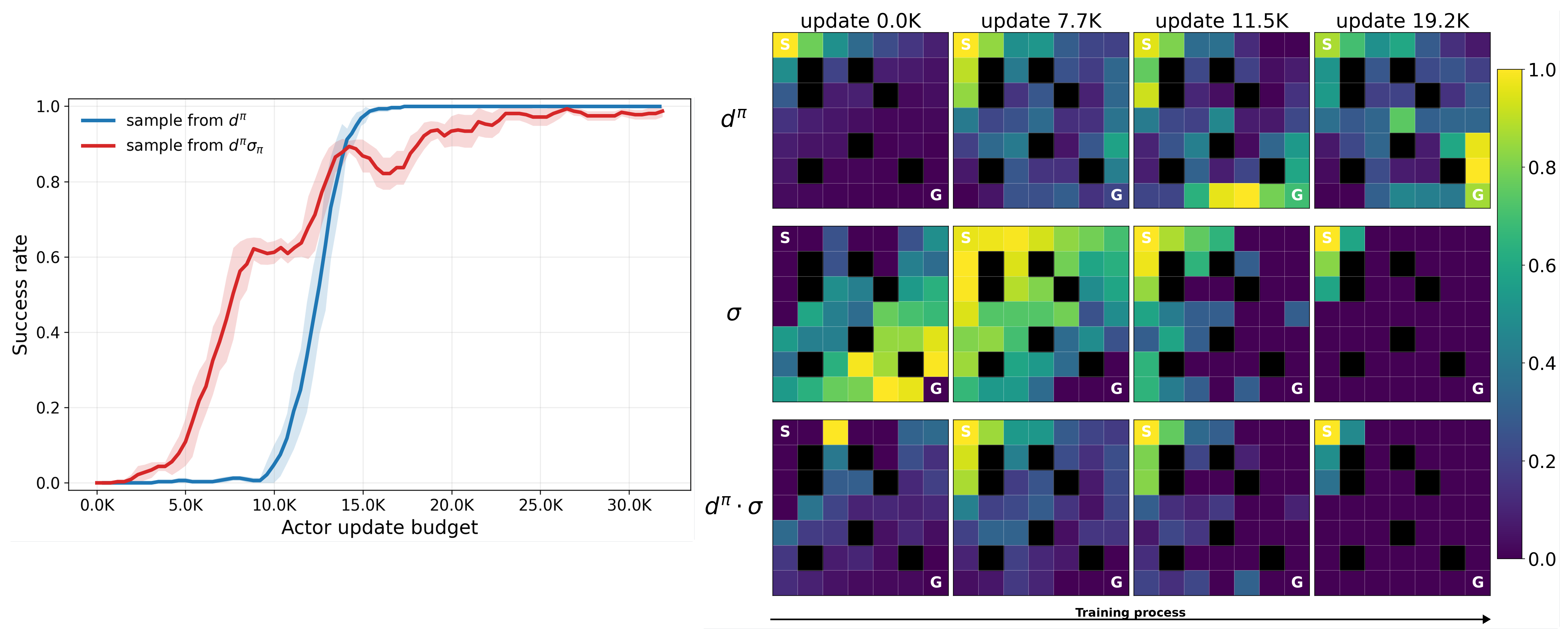}
    \caption{\textbf{An emergent curriculum from the variance-optimal proposal.}  $\mu_{\mathrm{var}}^\pi$ is concentrated near $G$ early in training, and gradually walks backward through the maze toward $S$ as cells closer to the goal are mastered one after another. This is the reverse curriculum: the training signal visits the state space in reverse without any externally imposed schedule. It is a built-in property of variance-optimal sampling: the variance-minimizing distribution by construction concentrates wherever the policy is reachable but still uncertain, and that frontier moves backward as the policy improves. }
    \label{fig:toy_model}
\end{figure}

\textbf{$\mu_{\mathrm{var}}^\pi$ shifts with the policy, producing an emergent curriculum.} Both factors depend on $\pi$: states that were once unreachable acquire visitation mass as $\pi$ improves, and states that once produced mixed-outcome rollouts become near-deterministic and lose residual signal. The frontier of high $\mu_{\mathrm{var}}^\pi$ therefore re-localizes itself as training proceeds, without any externally imposed schedule.

The dynamics above can be hard to visualize in settings where $d^\pi$ and $\sigma_\pi$ are intractable, so we examine $\mu_{\mathrm{var}}^\pi$ in a setting where both can be computed exactly. A small MLP policy navigates a $7 \times 7$ gridworld with obstacles from start $S$ to goal $G$ under terminal reward $R \in \{0, 1\}$; the state space is small enough that $d^{\pi_t}(s)$ and $\sigma_{\pi_t}(s)$ can be enumerated, and $\mu^{\pi_t}_\mathrm{var} \propto d^{\pi_t} \sigma_{\pi_t}$ sampled from without approximation. Figure~\ref{fig:toy_model} shows the three quantities and a learning-curve comparison across training. Early on, $d^{\pi_t}$ concentrates near $S$ while $\sigma_{\pi_t}$ peaks in a band of states near $G$ that the policy can occasionally reach; their product picks out the intersection. As training progresses, the high-$\sigma$ band recedes from $G$ toward $S$ as goal-adjacent states become near-deterministic successes, and $\mu_t$ tracks it---until both factors collapse onto a small residual hard region near the start. We note that this curriculum is not designed; it is what minimizing estimator variance looks like when made visible.

\section{VSR: A Practical Algorithm}
\label{sec:vsr}

The previous section characterized the variance-optimal proposal 
$\mu_{\mathrm{var}}^\pi \propto d^\pi \sigma_\pi$ as a structural object. 
However, both $d^\pi$ and $\sigma_\pi$ are intractable to compute exactly 
under realistic LLM rollout budgets, requiring cheap surrogates for 
implementation. We present \textbf{Variance-driven State Resampling} (VSR) in Algorithm~\ref{alg:vsr}, a plug-in modification of GRPO that approximates 
$\mu_{\mathrm{var}}^\pi$ via a prefix replay buffer and a zero-cost 
variance proxy.
The resulting algorithm should be understood as an 
approximation to the allocation principle of $\mu_{\mathrm{var}}^\pi$ 
rather than as an unbiased estimator of the exact proposal-corrected gradient it induces. In particular, practical VSR does not estimate the density ratio $d^\pi/\mu$: it changes the reset-state distribution from which rollout groups are launched and then applies the usual GRPO-style loss. Because the replay buffer and priorities evolve with the policy and replay history, practical VSR is a biased adaptive reset-state procedure motivated by the variance-allocation geometry, not optimization of one fixed reweighted objective.

\begin{algorithm}[h]
\caption{VSR (Variance-driven State Resampling)}
\label{alg:vsr}
\begin{algorithmic}[1]
\Require Initial policy $\pi_0$, prompt distribution $\mathcal{D}$, group size $G$, recency window $W$, replay ratio $\rho \in [0,1]$
\Statex \textbf{Optional:} expert successful trajectories $E$ for cold-start bootstrapping
\State Initialize replay buffer $\mathcal{B}_0 \gets \varnothing$ (or seed with prefix states from $E$ if cold-starting)
\For{$t = 0,1,\dots,T-1$}
    \State Draw fresh prompts $\{x_j\} \sim \mathcal{D}$ (fraction $1-\rho$) and replay prefixes $\{s_k\} \sim \mathcal{B}_t$ with probability $\propto \hat{\sigma}_t(s)$ (fraction $\rho$)
    \State Generate $G$ rollouts under $\pi_t$ from each $x_j$ and each $s_k$ 
    \State Compute GRPO-style loss function $\mathcal{L}_{\mathrm{PG}}(\pi_t)$ over the rollouts
    \State Update $\pi_{t+1} \gets \pi_t - \eta\, \nabla \mathcal{L}_{\mathrm{PG}}(\pi_t)$ 
    \State For each successful fresh trajectory $\tau$ generated from prompt $x_j$, insert every prefix state $s_i = (x_j, y_{<i})$ into $\mathcal{B}_{t+1}$ with score
    \[
    \hat{\sigma}_{t+1}(s_i) = \sqrt{\hat{p}_i(1-\hat{p}_i)},
    \quad
    \hat{p}_i = (1-\alpha_i)\,p_0(x_j) + \alpha_i\, R(\tau),
    \quad
    \alpha_i = i/|\tau|
    \]
    where $p_0(x_j)$ is the empirical success rate of $x_j$'s fresh rollout group
    \State Evict from $\mathcal{B}_{t+1}$ all entries older than $W$ steps
\EndFor
\end{algorithmic}
\end{algorithm}

\subsection{From $\mu^\pi_{\mathrm{var}}$ to a tractable surrogate}

\begin{wrapfigure}{r}{0.5\textwidth}
  \centering
  \vspace{-10pt}
  \includegraphics[width=0.5\textwidth]{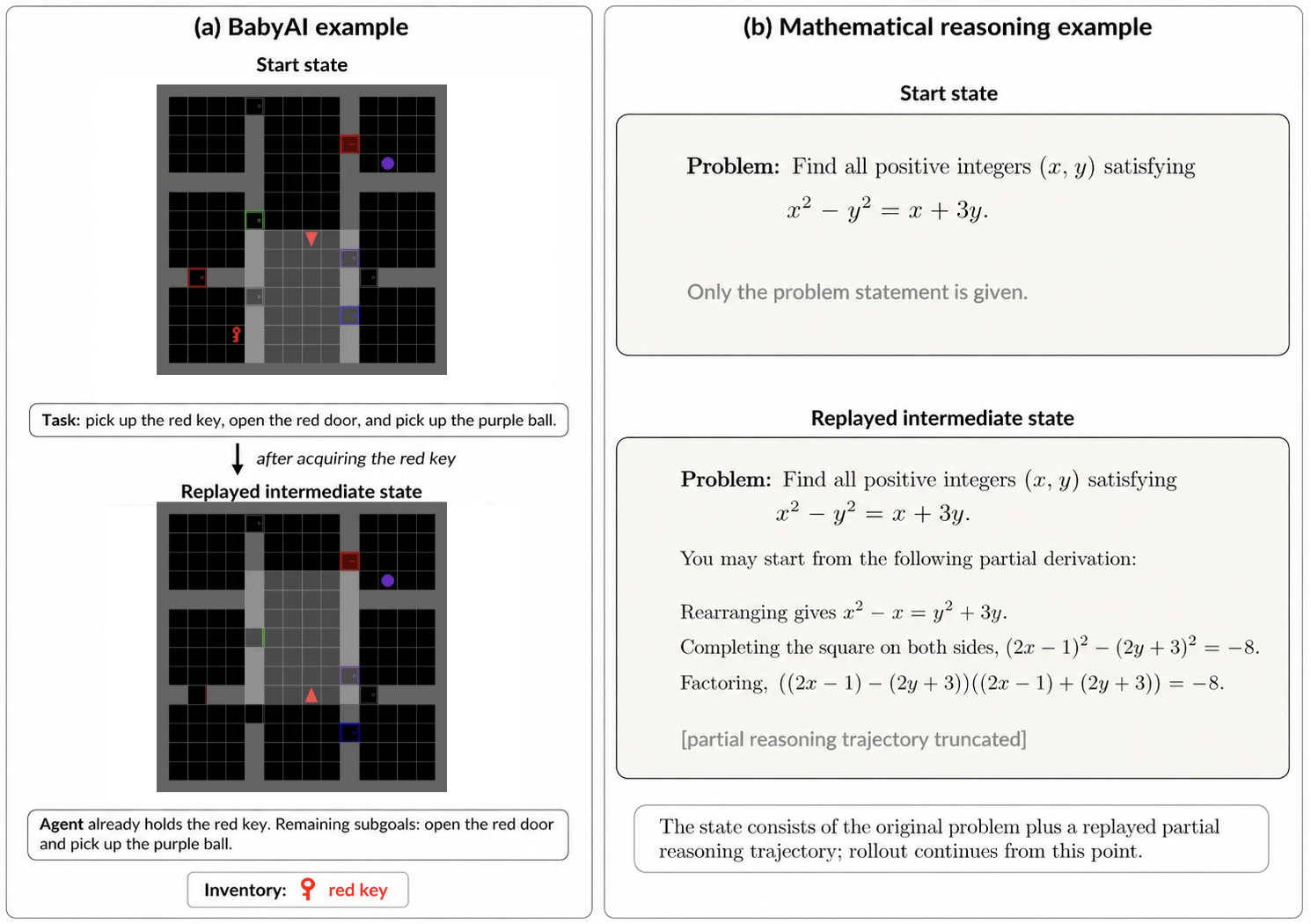}
  \caption{Illustration of intermediate states used by VSR. Left: a BabyAI agent after picking up the red key. Right: a math problem with a partial reasoning trace appended.}
  \label{fig:middlestate}
  \vspace{-10pt}
\end{wrapfigure}

The two factors in $\mu^\pi_{\mathrm{var}}$ are intractable in different ways. \textbf{(i) $\sigma_\pi$ need to be approximated at zero rollout cost.} Estimating $\sigma_\pi(s)$ from conditional rollouts would reintroduce precisely the cost that state-level resampling is designed to amortize, so any usable proxy must be built from quantities already available in each GRPO step---the group success rate $p_0$ and terminal rewards $R(\tau)$. \textbf{(ii) $d^\pi$ is not directly tractable.} We sidestep explicit density modeling via a recency-windowed prefix replay buffer $\mathcal{B}_t$ that collects states from recent rollouts, so that uniform draws from $\mathcal{B}_t$ approximate samples from $d^{\pi_t}$; recency-based eviction keeps the buffer aligned with the current policy.

Combining the two, we sample prefixes from $\mathcal{B}_t$ with probability proportional to $\hat\sigma_t(s)$ and launch fresh rollout groups from each (Algorithm~\ref{alg:vsr}). The design follows directly from $\mu^\pi_{\mathrm{var}}$: stay close to on-policy visitation through $\mathcal{B}_t$, and concentrate the sampling budget on states that still carry signal through $\hat\sigma_t$. Both surrogates are deliberately cheap rather than accurate.
\subsection{Online prefix proposal construction}

\paragraph{Recent prefix buffer.}
VSR maintains a recency-windowed replay buffer \(\mathcal{B}_t\) of token-level prefix states extracted from recent rollout groups. For a trajectory \(\tau=(s_0,a_0,\ldots,s_H)\), a replayable state has the form \(s_i=(x,y_{<i})\), i.e., the original prompt together with the generated prefix up to position \(i\). In agentic settings, the same state representation is applied to the serialized interaction history, including prior actions, tool outputs, and environment observations (Figure~\ref{fig:middlestate}). Sampling from \(\mathcal{B}_t\) gives a density-free empirical approximation to the current state-visitation measure \(d^{\pi_t}\), while recency eviction keeps the proposal aligned with the evolving policy.

In our implementation, \(\mathcal{B}_t\) stores prefixes from successful trajectories only. This is a deliberate departure from the exact empirical visitation distribution.  This filtering is not theoretically required by the variance-optimal proposal. In the hard-problem regime studied here, longer failed prefixes are empirically more likely to yield another all-fail continuation group (Appendix~\ref{app:rebuttal-controls}); we therefore use successful-prefix filtering as a practical budget-allocation choice that favors states known to lie on at least one completable trajectory.

\paragraph{Zero-cost variance score.}
Each buffered prefix is assigned a score intended to rank residual continuation uncertainty without additional conditional rollouts. For a prompt \(x\), let \(p_0(x)\) denote the empirical success rate of the rollout group generated from the original prompt, and let \(R(\tau)\in\{0,1\}\) denote the terminal reward of trajectory \(\tau\). For \(s_i\) being the \(i\)-th state along a trajectory with length $|\tau|$, we define its normalized position as $\alpha_i = i/|\tau|$ .

VSR then estimates the continuation success probability at \(s_i\) by interpolating between the group-level success rate at the initial state and the realized terminal outcome:
\[
\hat p_t(s_i\mid \tau)=(1-\alpha_i)p_0(x)+\alpha_i R(\tau),
\qquad
\hat\sigma_t(s_i\mid \tau)=
\sqrt{\hat p_t(s_i\mid \tau)\bigl(1-\hat p_t(s_i\mid \tau)\bigr)}.
\]
Early prefixes are therefore scored mostly by the prompt-level rollout statistics, while late prefixes are scored mostly by the observed outcome of the trajectory they belong to.  Because the buffer stores only successful trajectories, $R(\tau)=1$ for every stored prefix, so the implemented score is determined by the prompt-level success rate and normalized prefix position. We therefore use \(\hat\sigma_t\) only as a zero-additional-rollout uncertainty-ranking proxy, not as a calibrated, unbiased, or consistent estimator of \(\sigma_{\pi_t}(s)\). Appendix~\ref{app:rebuttal-controls} reports a statewise validation showing moderate positive association with empirical continuation uncertainty, including within fixed prefix positions.

\paragraph{Replay update.}
At each update, VSR forms a mixed batch of fresh prompts and replayed prefixes. A replay ratio \(\rho\in[0,1]\) determines the fraction of batch elements drawn from \(\mathcal{B}_t\), with \(\rho=0\) recovering the base GRPO procedure. Replayed prefixes are sampled with probability proportional to \(\hat\sigma_t\). For each selected prefix, the current policy generates a fresh group of continuations; the prefix itself is treated as fixed context and masked out of the policy-gradient loss. The GRPO-style objective and group-relative advantage computation are otherwise unchanged.


\paragraph{Cold start.}
When the base policy almost never reaches successful trajectories, the replay buffer may be initialized with prefixes from a small set of expert successful trajectories. These prefixes are used only as reset states for fresh rollouts under \(\pi_t\); their tokens are masked from the loss exactly like ordinary replayed prefixes. Recency eviction then gradually replaces seeded prefixes with states produced by the current policy, so the proposal transitions from expert-supported states to on-policy states as training progresses.

Overall, the recent prefix buffer supplies the \(d^\pi\) factor by keeping replay states close to the current policy’s visitation distribution, while variance-weighted sampling supplies the \(\sigma_\pi\) factor by concentrating rollout budget on uncertain states. Successful-prefix filtering and cold-start seeding bias the construction toward states that lie on completable trajectories, and recency eviction lets the resulting proposal move with the policy. The individual components are not independent replay heuristics, but practical approximations to the same variance-driven state allocation rule.
\section{Experiments}
\label{sec:experiments}
Our experiments target the regime where a variance-driven proposal is predicted to matter most: tasks on which the base policy almost always fails from the initial state, but where some mid-trajectory states admit continuations with genuinely mixed outcomes. In this regime, $\sigma_\pi$ is concentrated on intermediate states that on-policy rollouts rarely reach. We accordingly evaluate VSR in two settings where signals are rare: mathematical reasoning on a hard subset of OpenR1-Math-220k (filtered to prompts with low base-policy pass rate), and long-horizon agentic decision making on Sokoban and difficult BabyAI levels, where successful trajectories remain sparse under the base policies. 

\subsection{Mathematical reasoning}

\paragraph{Setup.}
We train Qwen2.5-7B-Instruct \citep{qwen2} and Llama-3.2-3B-Instruct \citep{meta2024introducing} on a 1K hard subset of OpenR1-Math-220k \citep{openr1}. The subset is constructed by filtering for prompts with low base-policy pass rate; for Qwen2.5-7B-Instruct, we retain prompts with at most 1 success out of 32 samples. VSR is initialized with replay prefixes extracted from successful DeepSeek-R1 trajectories \citep{deepseekai2025deepseekr1incentivizingreasoningcapability}. These expert-seeded prefixes are used only as starting states for fresh rollouts, with prefix tokens masked from the policy-gradient loss, and are gradually replaced by recent on-policy prefixes through the replay buffer.

We compare VSR+GRPO against standard GRPO under the same training setup, and against SFT on the same expert trajectories used for cold-start initialization. We evaluate on AIME24 \citep{aime24}, AIME25 \citep{aime25}, AMC23, OlympiadBench \citep{he2024olympiadbench}, MATH500 \citep{hendrycks2021measuring}, and Minerva \citep{lewkowycz2022solving}. Final benchmark accuracy is reported in Table~\ref{tab:math}, and training dynamics, including reward, validation performance, and response length, are shown in Figure~\ref{fig:math-curve}.

\begin{table}[t]
\centering
\caption{Mathematical reasoning results (\%) on AIME24, AIME25, AMC23, OlympiadBench, MATH500, and Minerva. Reinforce-Ada-Est and static uniform expert-prefix resets are additional single-run controls on Llama-3.2-3B-Instruct. Best result in each column is in \textbf{bold}.}
\label{tab:math}
\footnotesize
\begin{tabular*}{\textwidth}{@{\extracolsep{\fill}}ccccccccc}
\toprule
\textbf{Model} & \textbf{Method} & \textbf{AIME24} & \textbf{AIME25} & \textbf{AMC23} & \textbf{Olympiad} & \textbf{MATH500} & \textbf{Minerva} & \textbf{Avg} \\
\midrule
\multirow{3}{*}{\textit{\shortstack{Qwen2.5-7B\\-Instruct}}}
& Base model   & $10.0$ & $7.9$ & $53.2$ & $\mathbf{39.2}$ & $\mathbf{75.6}$ & $37.3$ & $37.2$ \\
& GRPO         & $8.8$ & $4.8$ & $53.4$ & $38.3$ & $72.5$ & $38.6$ & $36.1$ \\
& VSR+GRPO     & $\mathbf{11.4}$ & $\mathbf{10.4}$ & $\mathbf{57.7}$ & $33.5$ & $73.7$ & $\mathbf{38.7}$ & $\mathbf{37.6}$ \\
\midrule
\multirow{6}{*}{\textit{\shortstack{Llama-3.2\\-3B-Instruct}}}
& Base model   & $5.3$ & $0.6$ & $21.8$ & $12.6$ & $40.0$ & $13.8$ & $15.7$ \\
& SFT          & $0.3$ & $0.4$ & $7.2$ & $5.0$ & $23.1$ & $5.4$ & $6.9$ \\
& GRPO         & $4.4$ & $\mathbf{1.0}$ & $18.7$ & $12.6$ & $39.2$ & $16.5$ & $15.4$ \\
& Reinforce-Ada & $4.2$ & $0.1$ & $18.0$ & $12.4$ & $39.9$ & $15.4$ & $15.0$ \\
& \shortstack{Uniform expert-prefix} & $5.6$ & $0.5$ & $\mathbf{22.7}$ & $14.1$ & $43.7$ & $15.8$ & $17.1$ \\
& VSR+GRPO     & $\mathbf{6.2}$ & $0.6$ & $20.4$ & $\mathbf{15.3}$ & $\mathbf{54.0}$ & $\mathbf{17.8}$ & $\mathbf{19.1}$ \\
\bottomrule
\end{tabular*}
\end{table}

\paragraph{Additional controls.} We include two additional single-run controls for the Llama-3.2-3B math setting. Static uniform expert-prefix resets use the same successful expert-trajectory source as VSR but sample uniformly from a fixed expert-prefix pool, probing how much of the gain comes from privileged reset states alone. Reinforce-Ada-Est\citep{xiong2025reinforce} reallocates a matched response budget across the original prompts without intermediate-state resets, testing whether prompt-level adaptive allocation alone reproduces the VSR gain. As shown in Table~\ref{tab:math}, static expert-prefix resets improve the average score from 15.4 for GRPO to 17.1, while Reinforce-Ada-Est obtains 15.0 and VSR reaches 19.1. Thus privileged resets explain part of the improvement, whereas prompt-level allocation alone does not reproduce the full VSR result. Detailed setups and limitations are provided in Appendix~\ref{app:rebuttal-controls}.

\begin{figure}[t]
\centering
\includegraphics[width=1.0\linewidth]{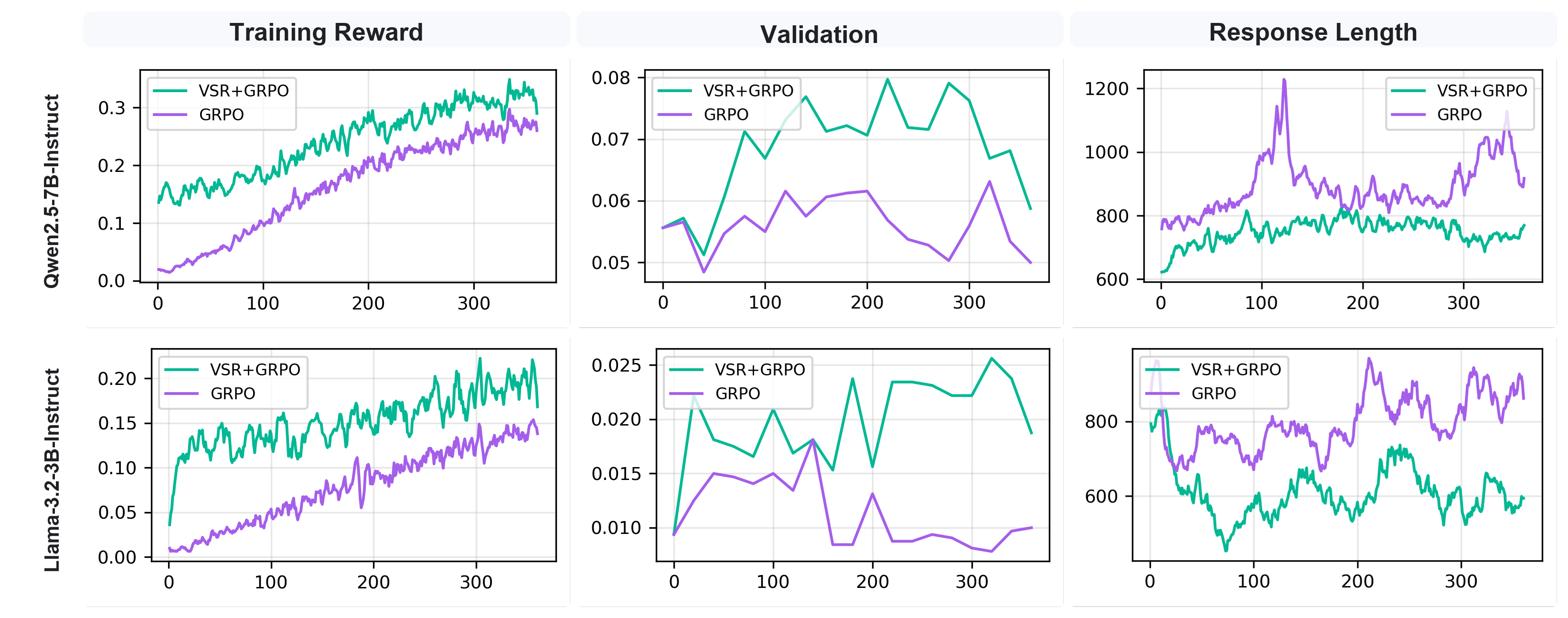}
\caption{Training dynamics on the 1K hard subset of OpenR1-Math-220k for Qwen2.5-7B-Instruct (top) and Llama-3.2-3B-Instruct (bottom). Columns show training reward, validation accuracy, and response length over training steps. VSR+GRPO sustains higher training reward and validation accuracy than GRPO with shorter responses.}
\label{fig:math-curve}
\end{figure}

\subsection{Long-horizon agentic tasks}
\label{sec:agentic}

\paragraph{Setup.}
We implement VSR on Sokoban\citep{gym_sokoban} and BabyAI\citep{chevalier2018babyai} using Qwen2.5-1.5B-Instruct and Qwen2.5-3B-Instruct\citep{qwen2}. For BabyAI, we use SynthLoc, OneRoom, and MiniBoss, which require multi-step planning and compositional decision making. We choose GiGPO \citep{feng2025group} as our base RL algorithm, and VSR is added as a plug-in state-resampling component. Unlike the math setting, the replay buffer is populated entirely from the model's own trajectories, without expert cold-start initialization.

We compare GiGPO+VSR against GiGPO under matched model, environment, and training settings. Final task success rates are reported in Table~\ref{tab:agentic}, and training dynamics, including reward, validation success, and newly generated continuation length after the selected reset state, are shown in Figure~\ref{fig:agent-curve}. The main training curves and most entries in Table~\ref{tab:agentic} are from single training runs; evaluation success rates use 128 fixed map seeds per environment, while independent-training-run uncertainty is reported separately for the scoring ablation below.

\begin{table}[t]
\centering
\caption{Agentic-task success rates (\%) on Sokoban and BabyAI. Best result is in \textbf{bold}.}
\label{tab:agentic}
\small
\begin{tabular}{ccccccc}
\toprule
& & \multirow{2}{*}{\textbf{Sokoban}} & \multicolumn{4}{c}{\textbf{BabyAI}} \\
\cmidrule(lr){4-7}
\textbf{Model} & \textbf{Method} & & \textbf{SynthLoc} & \textbf{OneRoom} & \textbf{MiniBoss} & \textbf{Avg} \\
\midrule
\multirow{3}{*}{\textit{Qwen2.5-1.5B-Instruct}}
& Base model    & $9.4$ & $11.6$ & $9.3$ & $21.4$ & $14.1$ \\
& GiGPO         & $12.7$ & $32.3$ & $27.9$ & $30.9$ & $30.4$ \\
& VSR+GiGPO     & $\mathbf{18.8}$ & $\mathbf{51.2}$ & $\mathbf{54.6}$ & $\mathbf{36.9}$ & $\mathbf{47.6}$ \\
\midrule
\multirow{3}{*}{\textit{Qwen2.5-3B-Instruct}}
& Base model    & $17.2$ & $9.3$ & $18.6$ & $16.7$ & $14.9$ \\
& GiGPO         & $21.5$ & $51.2$ & $46.5$ & $30.9$ & $42.9$ \\
& VSR+GiGPO     & $\mathbf{25.9}$ & $\mathbf{53.5}$ & $\mathbf{60.4}$ & $\mathbf{33.3}$ & $\mathbf{49.1}$ \\
\bottomrule
\end{tabular}
\end{table}

\begin{figure}[h]
\centering
\includegraphics[width=0.8\linewidth]{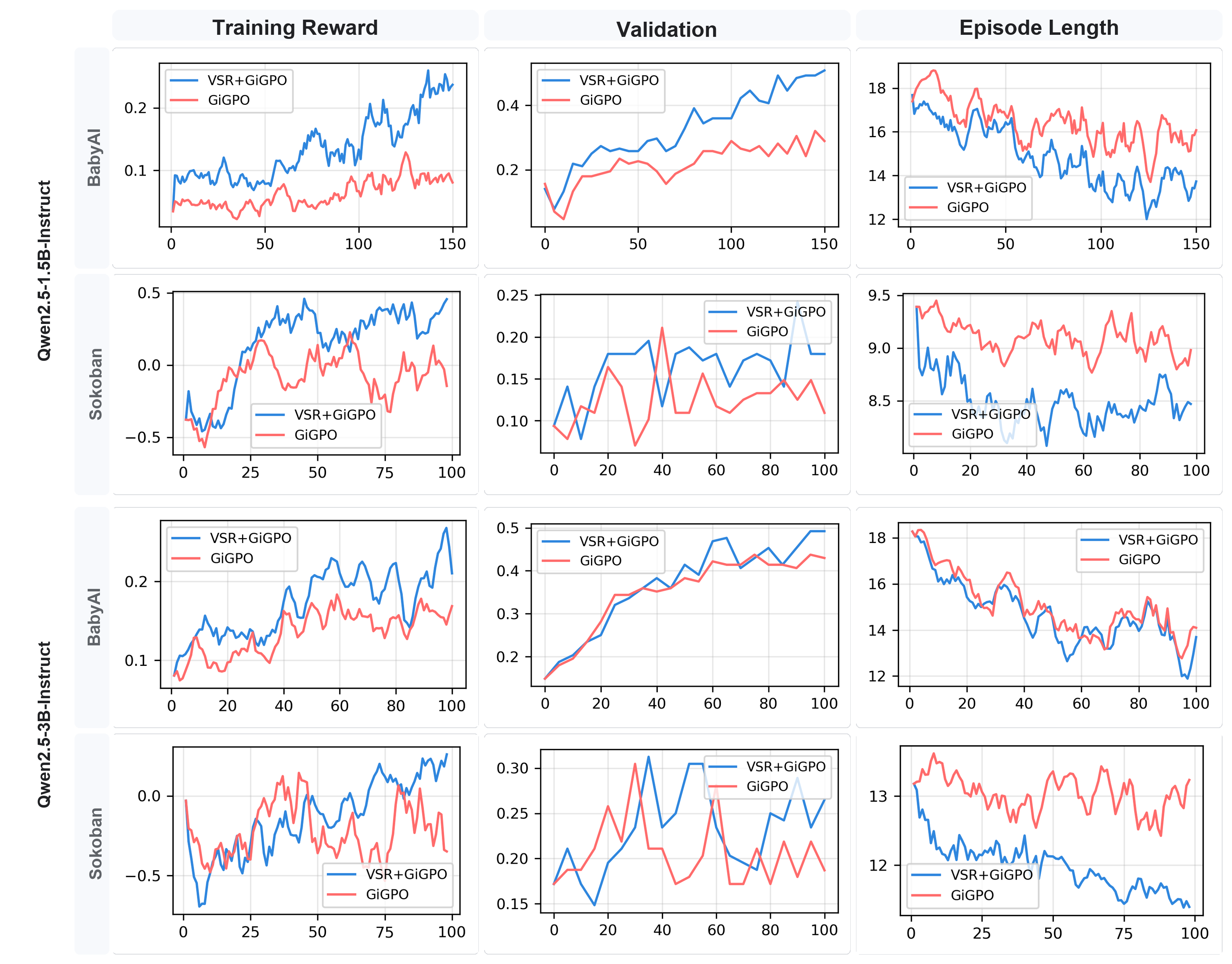}
\caption{Training dynamics on BabyAI and Sokoban for Qwen2.5-1.5B-Instruct (top two rows) and Qwen2.5-3B-Instruct (bottom two rows). Columns show training reward, validation success rate, and newly generated continuation length after the selected reset state over training steps. The shorter continuation length partly follows mechanically from starting a fraction of rollouts at intermediate states and should not be read as shorter complete trajectories from the original task state.}
\label{fig:agent-curve}
\end{figure}

\subsection{Ablations}
\label{sec:ablations}

We isolate the two engineering components of VSR through controlled variants, both built on top of GiGPO and evaluated on BabyAI under the same setup as Section~\ref{sec:agentic}. \textbf{(i) Uniform vs. variance-weighted sampling over the replay buffer.} VSR(w/o scoring) retains the recency-updated replay buffer but samples prefixes uniformly rather than in proportion to $\hat{\sigma}_t$. This isolates the contribution of the variance-driven weighting from that of prefix replay itself. \textbf{(ii) Static vs. policy-tracking replay buffer.} VSR(w/o recency) retains $\hat{\sigma}_t$-weighted sampling but disables the recency-based buffer update, so the buffer is populated once at the start of training and held fixed thereafter. This isolates the contribution of tracking $d^{\pi_t}$ from that of variance-driven weighting.

We additionally repeat the Qwen2.5-1.5B VSR-versus-uniform-replay scoring comparison across three independent training runs per variant. VSR reaches $48.2\pm2.4$ versus $44.0\pm4.3$ for uniform replay (mean $\pm$ SD), providing additional evidence for the benefit of variance-driven prioritization; full per-run results are reported in Appendix~\ref{app:rebuttal-controls}. The original no-recency curve remains a single run.

\begin{figure}[h]
\centering

\begin{minipage}[t]{0.48\linewidth}
\centering
\vspace{0pt}
\includegraphics[width=\linewidth]{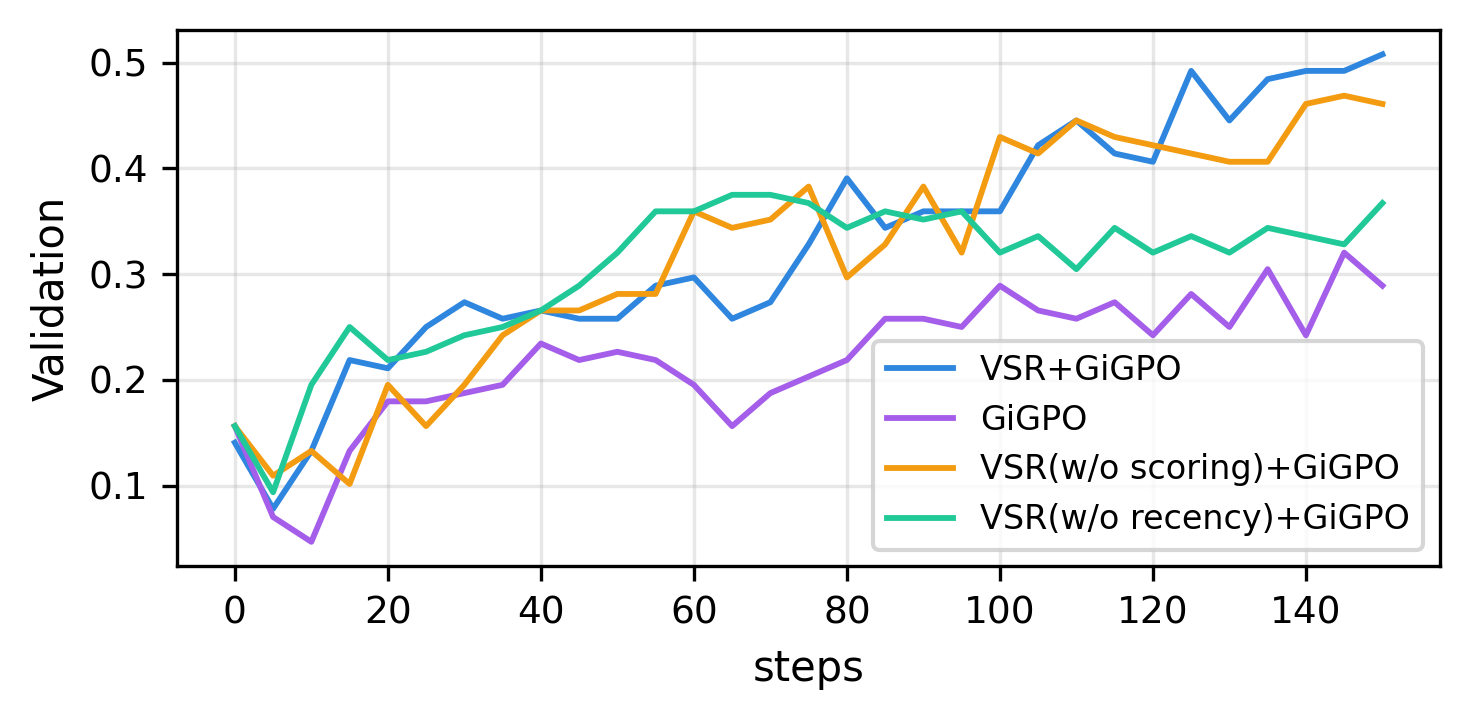}
\caption{Ablation on BabyAI: validation success rate over training.}
\label{fig:ablation}
\end{minipage}
\hfill
\begin{minipage}[t]{0.4\linewidth}
\centering
\vspace{0pt}
\vspace{0.1em} 

\captionof{table}{Ablation on BabyAI (avg success rate, \%). Best in \textbf{bold}.}
\label{tab:ablation}
\small
\begin{tabular}{lc}
\toprule
\textbf{Method} & \textbf{Avg} \\
\midrule
GiGPO              & $30.4$ \\
VSR(w/o scoring)   & $46.1$ \\
VSR(w/o recency)   & $36.7$ \\
VSR                & $\mathbf{47.6}$ \\
\bottomrule
\end{tabular}
\end{minipage}

\end{figure}

\subsection{Discussion}



\paragraph{VSR yields stronger per-batch training signal.} The training-reward curves in Figures~\ref{fig:math-curve} and~\ref{fig:agent-curve} indicate that VSR yields a higher per-batch gradient signal: VSR consistently gains more training reward over the whole training. On the hard math subset, GRPO's training reward continues to climb while its validation accuracy stays essentially flat, and final benchmark performance ends up below the base model on both models. The reason is that training signal is concentrated on a narrow handful of prompts that occasionally produce mixed-outcome groups, and on-policy sampling repeatedly draws gradient signal from that same small set. VSR addresses this at the level predicted by $\mu^\pi_{\mathrm{var}}$: even when the prompt-level support of $\sigma_\pi$ is narrow, its state-level support is much wider. 

\paragraph{VSR reduces the rollout cost of each policy update.} Because a non-trivial fraction of rollouts are initiated from mid-trajectory prefixes rather than from the original prompt, the expected number of newly generated tokens per trajectory is reduced; this is reflected in the consistently shorter newly generated continuation lengths in the agentic experiments and the analogous reduction in completion length observed in the mathematical reasoning experiments. Direct token accounting further supports this effect: at matched update counts, VSR+GiGPO generates 39.27M versus 81.30M tokens for Qwen2.5-1.5B and 34.43M versus 36.26M for Qwen2.5-3B; full accounting is reported in Appendix~\ref{app:rebuttal-controls}.

\paragraph{Variance-driven scoring improves training efficiency over uniform replay.} The comparison between VSR and VSR(w/o scoring) in Figure~\ref{fig:ablation} indicates that weighting buffered prefixes by $\hat{\sigma}_t$ yields higher validation performance on average over uniform sampling from the same buffer, particularly in the later phase of training. Across three independent training runs, VSR achieves $48.2\pm2.4$ versus $44.0\pm4.3$ for uniform replay (mean $\pm$ SD), further supporting the benefit of variance-driven prioritization; see Appendix~\ref{app:rebuttal-controls} for the individual runs. This indicates that the variance-driven weighting is not redundant with the replay mechanism itself: prioritizing prefixes with larger residual outcome variance translates into a more efficient use of the resampling budget.

\paragraph{Recency-based buffer updates are necessary to sustain learning.} VSR(w/o recency) outperforms full VSR initially but plateaus thereafter. Without ongoing updates, the buffer's empirical state distribution drifts away from $d^{\pi_t}$ as the policy evolves, and the policy is optimized against a proposal that no longer reflects its current visitation. The $d^\pi$ factor in $\mu_{\mathrm{var}}^\pi$ is precisely what guards against this distributional shift, consistent with the prediction in Section~\ref{sec:proposal:shape} that the variance-optimal proposal re-localizes with $\pi$.

\section{Conclusion}
\label{sec:conclusion}

We recast the signal-sparsity problem in RL with verifiable rewards as a variance-reduction problem over the state sampling distribution, and showed that the general trace-variance optimum depends on the action-gradient norm; under the stated within-state and across-state homogeneity conditions it reduces to the clean Neyman form $\mu^\pi_\mathrm{var} \propto d^\pi(s)\sigma_\pi(s)$, whose proposal-corrected estimator has the same standardized-residual form as GRPO's group-relative normalization. Building on this allocation principle, we proposed VSR, a practical prefix-replay method that improves over the compared GRPO/GiGPO baselines on most evaluated settings.

A main limitation is the gap between the exact proposal-corrected estimator and practical VSR. VSR does not estimate the exact density ratio, stores only successful prefixes, uses expert-prefix cold start in the math experiments, and relies on a coarse zero-additional-rollout uncertainty score. The additional controls and validations in Appendix~\ref{app:rebuttal-controls} clarify these effects but do not completely isolate every practical component; most main agentic results and the no-recency ablation also remain single-run. Tighter low-cost state scores and broader multi-seed evaluation remain important directions.

\bibliographystyle{plainnat}  
\bibliography{references}  


\appendix

\section{Related Work}

A broad line of recent work on sparse-reward RL for LLMs aims, at varying levels of explicitness, to address the same underlying difficulty: when terminal reward is binary and success is rare, naively sampling rollouts under the current policy yields many uninformative updates. We organize the discussion around what each line of work effectively reshapes—the prompt distribution, the within-trajectory state at which generation is conditioned, the task itself, or the per-prompt sampling budget.

\textbf{Selecting and scheduling existing prompts.} A first family of methods operates at the prompt level. Filtering and dynamic sampling discard prompts whose rollout groups produce uniform reward, on the grounds that they contribute no group-relative signal \citep{yu2025dapo,cui2025process,wang2025ragen,gulcehre2023reinforced}. Curriculum-style schedulers go further and order prompts by difficulty or learnability, ranging from classical curriculum learning \citep{bengio2009curriculum,portelas2020automatic} to recent LLM-specific instantiations that adapt the prompt distribution online \citep{zhao2024automatic,shi2025efficient,gao2025prompt,team2025kimi,setlur2025e3}. RLVE \citep{zeng2025rlve} makes the environment itself adaptive, generating verifiable problem instances calibrated to the current learner. A separate but related line studies the role of the sampler in preference-based post-training, where sampling distribution similarly affects gradient signal \citep{shi2024crucial,feng2025pilaf}. The granularity of intervention in this family is the prompt: once a prompt is selected, the distribution of intermediate states encountered along its rollouts is left entirely to the current policy.

\textbf{Conditioning on intermediate states within a trajectory.} A second family conditions generation on intermediate states rather than only on prompts. Prefix-conditioned RL methods inject partial trajectories—either expert-derived or drawn from earlier policy iterates—as starting points for fresh rollouts \citep{zhang2025bread,qu2026pope,yi2026pivotrl}, extending an older RL tradition of reverse curricula style state resets \citep{andrychowicz2017hindsight,florensa2017reverse,ecoffet2019go,xi2024training} . Hint injection during rollout \citep{liao2026self} and recursive problem decomposition \citep{simonds2025ladder} can be viewed as variants of the same idea, supplying privileged context that places the policy at a more learnable state before continuation. ReST \citep{gulcehre2023reinforced} is a closer precedent on the replay side, iteratively training on successful trajectories, though as an offline SFT-then-RL procedure rather than as an online state-resampling mechanism. The criterion by which intermediate states are selected varies across these methods—difficulty, expert provenance, or a hand-designed schedule.

\textbf{Generating new tasks: self-play schemes.} A third family sidesteps the question of which existing state to train on by generating new tasks calibrated to the learner. Self-play and proposer–solver frameworks have the model—or a paired proposer model—propose problems that are subsequently solved and verified, with the proposal policy itself trained to produce informative tasks \citep{haluptzok2022language,zhao2025absolute,huang2025r,chen2025self,lu2025search,liu2025spiral,wilf2025propose,wang2026rlanything} . The motivation of self-play schemes overlaps with curriculum learning and prefix conditioning, putting training mass on states from which learning signal is accessible

\textbf{Variance-driven sampling budget reallocation.} Closest in motivation are methods that explicitly target estimator variance.  While a classical thread on variance reduction in policy-gradient estimators more broadly \citep{greensmith2004variance} focuses on baseline design rather than proposal design, some recent researches focus on prompt sampling. Reinforce-Ada \citep{xiong2025reinforce} adaptively allocates rollouts across prompts under non-linear RL objectives, and \citet{yao2025optimizing} minimizes gradient variance through rejection sampling and rollout reallocation in chain-of-thought RL. Both are concerned with the same underlying inefficiency—sampling budget spent on prompts that yield no usable signal—and both reallocate rollouts across prompts, holding the per-prompt distribution of intermediate states fixed. 

\section{Theoretical analysis of the variance-driven proposal}
\label{app:theory}

This appendix collects the full theoretical analysis underlying the
variance-driven proposal $\mu_{\mathrm{var}}\propto d^\pi(s)\sigma_\pi(s)$
introduced in Section~\ref{sec:proposal}. Section~\ref{app:setup} establishes
notation and the working form of the estimator variance.
Section~\ref{app:exact} derives the exact variance-optimal proposal.
Section~\ref{app:clean} specializes it to the standard gradient-uniformity
convention, recovering $\mu_{\mathrm{var}}$ and the GRPO-style estimator.
Section~\ref{app:weak} relaxes that convention and gives a quantitative
condition---weaker than gradient uniformity---under which $\mu_{\mathrm{var}}$
still strictly reduces variance relative to on-policy sampling.

\subsection{Setup}
\label{app:setup}

Throughout, we use the state-action view of LLM RL described in
Section~\ref{sec:setup}: $s_t=(x,y_{<t})$, $a_t=y_t$, $s_0=x\sim\mathcal{D}$,
with terminal verifiable reward $R\in\{0,1\}$. The state-visitation
distribution is
\[
d^\pi(s) \;\propto\; \sum_{t=0}^{T-1}\Pr\!\big[s_t=s\,\big|\,s_0\sim\mathcal{D},\,\pi\big].
\]
For a state-dependent baseline $b$ and a proposal $\mu$ with
$\mathrm{supp}(\mu)\supseteq\mathrm{supp}(d^\pi)$, the proposal-corrected
policy-gradient estimator is
\begin{equation}
\label{eq:app-pg}
\widehat{\nabla J}_{b,\mu}
  \;=\; \frac{d^\pi(s)}{\mu(s)}\big(r(s,a)-b(s)\big)\nabla\log\pi(a\mid s),
  \qquad s\sim\mu,\ a\sim\pi(\cdot\mid s),
\end{equation}
which is unbiased for $\nabla J(\pi)$ for every admissible pair $(b,\mu)$.
We write $g(s,a):=\nabla\log\pi(a\mid s)$, $\delta(s,a):=r(s,a)-b(s)$,
$\bar r(s):=\mathbb{E}_{a\sim\pi(\cdot\mid s)}[r(s,a)]$, and use $\sigma_\pi(s)$
for the within-state reward standard deviation,
$\sigma_\pi(s)^2:=\mathrm{Var}_{a\sim\pi(\cdot\mid s)}\!\big(r(s,a)\big)$.

For a fixed baseline $b$, define the gradient-signal second moment
\begin{equation}
\label{eq:M-def}
M_b(s) \;:=\; \mathbb{E}_{a\sim\pi(\cdot\mid s)}\!\big[(r(s,a)-b(s))^2\,H(s,a)\big],
\qquad H(s,a):=\|g(s,a)\|^2.
\end{equation}

\begin{lemma}[Working form of the variance]
\label{lem:V-form}
For any admissible $(b,\mu)$,
\begin{equation}
\label{eq:V-form}
V(\mu;b) \;:=\; \mathrm{tr}\,\mathrm{Cov}\!\big(\widehat{\nabla J}_{b,\mu}\big)
  \;=\; \mathbb{E}_{s\sim d^\pi}\!\left[\frac{d^\pi(s)}{\mu(s)}\,M_b(s)\right]
       \;-\;\|\nabla J(\pi)\|^2.
\end{equation}
\end{lemma}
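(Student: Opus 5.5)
The plan is to use the identity $\mathrm{tr}\,\mathrm{Cov}(X)=\mathbb{E}\|X\|^2-\|\mathbb{E}X\|^2$ for the random vector $X=\widehat{\nabla J}_{b,\mu}$ of \eqref{eq:app-pg}. There are two steps: first identify the mean term, then rewrite the second moment as an expectation under $d^\pi$.

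\textbf{Mean term.} Because $(b,\mu)$ is admissible, the estimator is unbiased, so $\mathbb{E}X=\nabla J(\pi)$. To make this self-contained I would verify it directly. Changing measure from $\mu$ to $d^\pi$ gives
\[
\mathbb{E}_{s\sim\mu}\Big[\tfrac{d^\pi(s)}{\mu(s)}\,F(s)\Big]=\mathbb{E}_{s\sim d^\pi}[F(s)],
\]
which holds because $\mathrm{supp}(\mu)\supseteq\mathrm{supp}(d^\pi)$, so no $d^\pi$-mass is lost where $\mu=0$. The baseline term then drops out by the score-function identity $\mathbb{E}_{a\sim\pi(\cdot\mid s)}[g(s,a)]=0$. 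What remains is the policy-gradient expression of Section~\ref{sec:setup}, giving the $-\|\nabla J(\pi)\|^2$ term.

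\textbf{Second moment.} Since the importance weight and $\delta=r-b$ are scalars,
\[
\|X\|^2=\Big(\tfrac{d^\pi(s)}{\mu(s)}\Big)^2\delta(s,a)^2 H(s,a).
\]
I would condition on $s$ and take the inner expectation over $a\sim\pi(\cdot\mid s)$, which yields exactly $M_b(s)$ from \eqref{eq:M-def}. This leaves
\[
\mathbb{E}_{s\sim\mu}\Big[\big(\tfrac{d^\pi(s)}{\mu(s)}\big)^2 M_b(s)\Big].
\]
Absorbing one factor of the ratio by the same change of measure (restricting to $\mathrm{supp}(d^\pi)$, where $\mu>0$) converts this to $\mathbb{E}_{s\sim d^\pi}\big[\tfrac{d^\pi(s)}{\mu(s)}M_b(s)\big]$. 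Combining the two terms gives \eqref{eq:V-form}.

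\textbf{Main obstacle.} The only delicate point is measure-theoretic. States with $\mu(s)>0$ but $d^\pi(s)=0$ contribute zero, and if the second moment is infinite both sides are $+\infty$, so I would state that the identity holds in the extended sense. I would also handle the interchange of the conditional expectation by Tonelli's theorem, which applies because the integrand is nonnegative. With those conventions in place, everything else is direct algebra.
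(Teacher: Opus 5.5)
Your proposal is correct and takes the same route as the paper. The paper also uses unbiasedness to write the trace covariance as $\mathbb{E}\|\widehat{\nabla J}_{b,\mu}\|^2-\|\nabla J(\pi)\|^2$, then expands $s\sim\mu$ to turn the second moment into $\mathbb{E}_{d^\pi}\bigl[\tfrac{d^\pi(s)}{\mu(s)}M_b(s)\bigr]$; your direct check of unbiasedness and your support and Tonelli remarks fill in details the paper leaves implicit.
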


\begin{proof}
By unbiasedness, $\mathrm{tr}\,\mathrm{Cov}(\widehat{\nabla J}_{b,\mu})=\mathbb{E}\,\|\widehat{\nabla J}_{b,\mu}\|^2-\|\nabla J(\pi)\|^2$. Expanding $s\sim\mu$ gives
\[
\mathbb{E}\,\|\widehat{\nabla J}_{b,\mu}\|^2
= \mathbb{E}_{d^\pi}\!\left[\frac{d^\pi(s)}{\mu(s)}M_b(s)\right]. \qedhere
\]
\end{proof}

\subsection{The exact joint optimum}
\label{app:exact}

\begin{theorem}[Exact joint-optimal baseline and proposal]
\label{thm:exact}
For each state, the trace-variance-minimizing baseline is
\begin{equation}
\label{eq:exact-baseline}
b_w^*(s)=\frac{\mathbb{E}[r(s,a)H(s,a)\mid s]}{\mathbb{E}[H(s,a)\mid s]}.
\end{equation}
Defining
\begin{equation}
\label{eq:M-star-def}
M_w^*(s):=\mathbb{E}[(r(s,a)-b_w^*(s))^2H(s,a)\mid s],
\end{equation}
the joint-optimal proposal on the effective support is
\begin{equation}
\label{eq:exact-opt}
\mu^{\pi,\star}(s)\propto d^\pi(s)\sqrt{M_w^*(s)}.
\end{equation}
If full support is required, the proposal can be smoothed by mixing with $d^\pi$.
\end{theorem}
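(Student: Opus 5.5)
The plan is to start from Lemma~\ref{lem:V-form}. It writes the objective as
\[
V(\mu;b)=\mathbb{E}_{s\sim d^\pi}\!\left[\tfrac{d^\pi(s)}{\mu(s)}M_b(s)\right]-\|\nabla J(\pi)\|^2 .
\]
The subtracted term $\|\nabla J(\pi)\|^2$ does not depend on $(b,\mu)$, because the estimator is unbiased for every admissible pair. So minimizing $V$ amounts to minimizing $\sum_s d^\pi(s)^2 M_b(s)/\mu(s)$.

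This functional separates the two variables. For each fixed $\mu$, it is a nonnegatively weighted sum over states of $M_b(s)$, and each term depends only on the single value $b(s)$. The joint minimization therefore splits into two stages. First, minimize $M_b(s)$ over $b(s)$ statewise; the result does not depend on $\mu$. Second, minimize over $\mu$ with $M_b$ fixed at its statewise optimum. The identity $\min_{b,\mu} = \min_\mu \min_b$ together with the $\mu$-independence of the inner minimizer gives joint optimality.

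\textbf{Baseline step.} For fixed $s$, the quantity
\[
M_b(s)=\mathbb{E}[r^2H\mid s]-2b\,\mathbb{E}[rH\mid s]+b^2\,\mathbb{E}[H\mid s]
\]
is a convex quadratic in the scalar $b$, assuming $\mathbb{E}[H\mid s]>0$. Setting the derivative to zero gives $b_w^*(s)$ as in~\eqref{eq:exact-baseline}. If $\mathbb{E}[H\mid s]=0$, then $H=0$ almost surely given $s$, so $M_b(s)=0$ for every $b$; any baseline is optimal there, and the state lies outside the effective support anyway.

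\textbf{Proposal step.} Write $c(s):=d^\pi(s)^2M_w^*(s)\ge 0$. The task is then to minimize $\sum_s c(s)/\mu(s)$ over probability vectors $\mu$. By Cauchy--Schwarz,
\[
\Big(\sum_s \sqrt{c(s)}\Big)^2=\Big(\sum_s \sqrt{\tfrac{c(s)}{\mu(s)}}\sqrt{\mu(s)}\Big)^2\le \sum_s \tfrac{c(s)}{\mu(s)}\cdot\sum_s\mu(s)=\sum_s\tfrac{c(s)}{\mu(s)} .
\]
Equality holds exactly when $\mu(s)\propto\sqrt{c(s)}=d^\pi(s)\sqrt{M_w^*(s)}$, which is~\eqref{eq:exact-opt}. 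The optimal value is $\big(\mathbb{E}_{d^\pi}\sqrt{M_w^*}\big)^2-\|\nabla J\|^2$. The same result can be obtained from a Lagrange multiplier argument, which yields $\mu(s)^2\propto c(s)$.

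\textbf{Main obstacle: support.} The support condition is where care is needed. States with $M_w^*(s)=0$ receive $\mu^\star(s)=0$, which violates $\mathrm{supp}(\mu)\supseteq\mathrm{supp}(d^\pi)$. I will handle this in two parts.

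First, at any such state the estimator has second moment $\mathbb{E}[(r-b_w^*)^2H\mid s]=0$. Its integrand therefore vanishes almost surely, so it contributes nothing either to the mean or to the variance. Under the convention $0/0=0$, unbiasedness is preserved on the effective support $\{s: d^\pi(s)M_w^*(s)>0\}$, and the minimum is attained there.

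Second, when full support is required, I will use $\mu_\varepsilon=(1-\varepsilon)\mu^\star+\varepsilon d^\pi$. This mixture is admissible, and its variance converges to the optimal value as $\varepsilon\to0$. The optimal value is therefore an infimum over fully supported proposals, attained exactly on the effective support, which justifies the smoothing remark in the statement.

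Finally, I will note that the argument uses only finiteness of $\mathbb{E}[r^2H\mid s]$ and of $\mathbb{E}_{d^\pi}\sqrt{M_w^*}$. These conditions should be stated as standing integrability assumptions.
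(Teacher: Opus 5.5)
Your proposal is correct and follows the paper's proof. For fixed $\mu$, the positive statewise weights $d^\pi(s)^2/\mu(s)$ reduce the baseline problem to minimizing the quadratic $\mathbb{E}[(r-b)^2H\mid s]$, which gives $b_w^*$, and substituting back, Cauchy--Schwarz (Neyman allocation) gives $\mu\propto d^\pi\sqrt{M_w^*}$. Your handling of states with $M_w^*(s)=0$ and the $\varepsilon$-smoothing limit just makes explicit what the paper leaves implicit in the phrase ``on the effective support.''
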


\begin{proof}
For fixed $\mu$, the statewise factor multiplying $M_b(s)$ in~\eqref{eq:V-form} is positive, so minimizing over $b(s)$ is equivalent to minimizing $\mathbb{E}[(r-b)^2H\mid s]$. Differentiating in $b$ gives~\eqref{eq:exact-baseline}. Substituting $b_w^*$ into~\eqref{eq:V-form}, Cauchy--Schwarz yields the minimizing proposal $\mu(s)\propto d^\pi(s)\sqrt{M_w^*(s)}$ exactly as in the standard Neyman-allocation argument.
\end{proof}

\subsection{Clean case: conditions yielding $d^\pi\sigma_\pi$}
\label{app:clean}

The exact joint optimum couples reward variability and action-gradient norms. The following two conditions separate these quantities.

\begin{assumption}[Within-state conditional-moment condition]
\label{ass:G-within}
For each state $s$, there exists $G(s)>0$ such that
\begin{equation}
\mathbb{E}[H(s,a)\mid s,r]=G(s).
\end{equation}
That is, conditional on the state, the expected squared score-function norm does not depend on the realized reward.
\end{assumption}

Under Assumption~\ref{ass:G-within}, $b_w^*(s)=\bar r(s)$ and
\begin{equation}
M_w^*(s)=G(s)\sigma_\pi(s)^2,
\qquad
\mu^{\pi,\star}(s)\propto d^\pi(s)\sigma_\pi(s)\sqrt{G(s)}.
\end{equation}

\begin{assumption}[Across-state gradient-scale homogeneity]
\label{ass:G}
$G(s)\equiv G_0$ on the effective support.
\end{assumption}

\begin{theorem}[Clean Neyman form]
\label{thm:clean}
Under Assumptions~\ref{ass:G-within} and~\ref{ass:G}, the joint-optimal proposal on the effective support reduces to
\begin{equation}
\label{eq:mu-var-form}
\mu^{\pi,\star}(s) \;\propto\; d^\pi(s)\,\sigma_\pi(s) \;=:\; \mu_{\mathrm{var}}(s),
\end{equation}
and the exact proposal-corrected estimator is proportional to
\begin{equation}
\label{eq:joint-opt}
\frac{r(s,a)-\bar r(s)}{\sigma_\pi(s)}\nabla\log\pi(a\mid s),
\qquad s\sim\mu_{\mathrm{var}},\ a\sim\pi(\cdot\mid s).
\end{equation}
Moreover,
\begin{equation}
\label{eq:clean-gap}
V(d^\pi;\bar r)-V(\mu_{\mathrm{var}};\bar r)=G_0\,\mathrm{Var}_{s\sim d^\pi}(\sigma_\pi(s))\ge 0,
\end{equation}
with strict inequality whenever $\sigma_\pi$ is non-constant on the effective support.
\end{theorem}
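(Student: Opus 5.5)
The plan is to specialize Theorem~\ref{thm:exact} using the two assumptions, then evaluate Lemma~\ref{lem:V-form} in closed form at the two proposals $d^\pi$ and $\mu_{\mathrm{var}}$.

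\textbf{The proposal.} Under Assumption~\ref{ass:G-within}, the tower property gives
\[
\mathbb{E}[rH\mid s]=\mathbb{E}\bigl[r\,\mathbb{E}[H\mid s,r]\mid s\bigr]=G(s)\,\bar r(s)
\qquad\text{and}\qquad
\mathbb{E}[H\mid s]=G(s).
\]
Hence $b_w^*(s)=\bar r(s)$. Conditioning on $r$ in the same way gives $M_w^*(s)=G(s)\,\mathbb{E}[(r-\bar r)^2\mid s]=G(s)\sigma_\pi(s)^2$. Substituting into~\eqref{eq:exact-opt}, with $G(s)=G_0$ from Assumption~\ref{ass:G}, yields $\mu^{\pi,\star}\propto d^\pi\sigma_\pi\sqrt{G_0}\propto d^\pi\sigma_\pi$. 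This is~\eqref{eq:mu-var-form}. The effective support is $\{\sigma_\pi>0\}$: on $\{\sigma_\pi=0\}$ we have $M_w^*=0$, so those states contribute nothing to the second moment.

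\textbf{The estimator.} Write $Z:=\mathbb{E}_{s\sim d^\pi}[\sigma_\pi(s)]$, so that $\mu_{\mathrm{var}}=d^\pi\sigma_\pi/Z$ and the density ratio is $d^\pi/\mu_{\mathrm{var}}=Z/\sigma_\pi$. Plugging this ratio and $b=\bar r$ into~\eqref{eq:app-pg} gives $Z\,(r-\bar r)\nabla\log\pi/\sigma_\pi$. This is~\eqref{eq:joint-opt}, with constant of proportionality $Z$.

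\textbf{The gap.} By Lemma~\ref{lem:V-form} with $b=\bar r$, we have $M_{\bar r}(s)=G_0\sigma_\pi(s)^2$.
\begin{itemize}
\item At $\mu=d^\pi$ the ratio is $1$, so $V(d^\pi;\bar r)=G_0\,\mathbb{E}_{d^\pi}[\sigma_\pi^2]-\|\nabla J\|^2$.
\item At $\mu=\mu_{\mathrm{var}}$ the ratio is $Z/\sigma_\pi$, so $V(\mu_{\mathrm{var}};\bar r)=G_0\,Z\,\mathbb{E}_{d^\pi}[\sigma_\pi]-\|\nabla J\|^2=G_0 Z^2-\|\nabla J\|^2$.
\end{itemize}
Subtracting gives $G_0\bigl(\mathbb{E}_{d^\pi}[\sigma_\pi^2]-(\mathbb{E}_{d^\pi}\sigma_\pi)^2\bigr)=G_0\,\mathrm{Var}_{d^\pi}(\sigma_\pi)\ge0$. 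The difference is strictly positive exactly when $\sigma_\pi$ is not $d^\pi$-a.s. constant.

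\textbf{Main obstacle: the support issue.} The estimator~\eqref{eq:app-pg} needs $\mathrm{supp}(\mu)\supseteq\mathrm{supp}(d^\pi)$, but $\mu_{\mathrm{var}}$ vanishes on states with $\sigma_\pi(s)=0$. I would handle this directly. On such states $r=\bar r$ almost surely, so $(r-\bar r)\nabla\log\pi=0$ and those states contribute zero to $\nabla J$ once the baseline $\bar r$ is subtracted. Unbiasedness therefore holds on the effective support, and the expectation in Lemma~\ref{lem:V-form} can be restricted to it with the convention $0/0=0$. For a fully rigorous version, one can instead take the smoothed proposal $\mu_\varepsilon$ and let $\varepsilon\to0$, checking that the variance converges.

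With the support issue settled, the remaining steps are routine tower-property and Cauchy--Schwarz/Jensen computations.
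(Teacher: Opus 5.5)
Your proposal is correct and follows the paper's proof. You specialize Theorem~\ref{thm:exact} via the tower property under Assumption~\ref{ass:G-within} to get $b_w^*=\bar r$ and $M_w^*=G(s)\sigma_\pi^2$, set $G\equiv G_0$, substitute the density ratio $Z/\sigma_\pi$ into the estimator, and evaluate Lemma~\ref{lem:V-form} at both proposals. Your explicit computation $G_0\bigl(\mathbb{E}_{d^\pi}[\sigma_\pi^2]-Z^2\bigr)$ is exactly the Jensen calculation the paper only alludes to, and your handling of states with $\sigma_\pi=0$ supplies a support detail the paper leaves implicit.
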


\begin{proof}
Assumption~\ref{ass:G-within} gives $b_w^*=\bar r$ and $M_w^*=G(s)\sigma_\pi^2(s)$. Assumption~\ref{ass:G} then gives $M_w^*=G_0\sigma_\pi^2$, so~\eqref{eq:exact-opt} reduces to~\eqref{eq:mu-var-form}; substituting in the proposal-corrected estimator yields~\eqref{eq:joint-opt}. The variance-gap expression follows from the same Jensen calculation as in the original clean-case derivation.
\end{proof}

\subsection{Weak version: variance reduction beyond gradient uniformity}
\label{app:weak}

The clean conditions above are idealizations that fail to varying degrees in
practice: different states have different conditional gradient-norm scales,
and these scales may co-vary with $\sigma_\pi$ across states. The natural
question is whether $\mu_{\mathrm{var}}\propto d^\pi\sigma_\pi$---which no
longer coincides with the exact optimum~\eqref{eq:exact-opt} once
Assumption~\ref{ass:G} is dropped---is still a strict improvement over
on-policy sampling, and under what condition.

To state this cleanly we factor $M(s)$ through $\sigma_\pi(s)$. On
$\{s:\sigma_\pi(s)>0\}$, define
\begin{equation}
\label{eq:K-def}
K(s) \;:=\; \frac{M(s)}{\sigma_\pi(s)^2}.
\end{equation}
Assumption~\ref{ass:G} is equivalent to $K\equiv G_0$ being a state-independent
constant; relaxing it amounts to allowing $K$ to vary across $s$.

For notational simplicity, we assume throughout this subsection that
$\sigma_\pi(s)>0$ for $d^\pi$-almost every state. States with
$\sigma_\pi(s)=0$ carry no reward-variance signal under $b=\bar r$ and can be
removed by working on the effective support, with the corresponding
normalization factors tracked explicitly.

The relaxed assumption is a quantitative bound on the heterogeneity of $K$,
measured by its coefficient of variation, relative to the spread of
$\sigma_\pi$:

\begin{assumption}[Gradient regularity]
\label{ass:G-prime}
On the set of states $\{s:\sigma_\pi(s)>0\}$, the function $K(s)$ satisfies
\begin{equation}
\label{eq:G-prime}
\frac{\sqrt{\mathrm{Var}_{d^\pi}(K)}}{\mathbb{E}_{d^\pi}[K]}
  \;<\; \frac{2}{3}\sqrt{\mathrm{Var}_{d^\pi}(\sigma_\pi)},
\end{equation}
where the moments are taken with respect to $s\sim d^\pi$ restricted to
that set.
\end{assumption}

Assumption~\ref{ass:G-prime} is strictly weaker than Assumption~\ref{ass:G}:
gradient uniformity is the limit $\mathrm{Var}(K)=0$, in which the left-hand
side is zero and~\eqref{eq:G-prime} is vacuous (provided $\sigma_\pi$ is
non-constant). It admits a transparent reading: $\mu_{\mathrm{var}}$ remains
beneficial as long as the relative state-to-state variation of $K$ is small
compared to the absolute spread of $\sigma_\pi$ on the policy's reachable
set---i.e., as long as the policy's reachable-but-uncertain frontier is wide
enough to outweigh whatever heterogeneity $K$ exhibits across states.

\begin{theorem}[Variance reduction under gradient regularity]
\label{thm:weak}
Suppose $r\in\{0,1\}$ and $b=\bar r$. Under Assumption~\ref{ass:G-prime},
\begin{equation}
\label{eq:weak-gap}
V(\mu_{\mathrm{var}};b^*) \;<\; V(d^\pi;b^*).
\end{equation}
\end{theorem}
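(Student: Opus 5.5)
By Lemma~\ref{lem:V-form}, the $-\|\nabla J(\pi)\|^2$ term is the same for every proposal and cancels. So it suffices to compare the first terms with $b=\bar r$ and $M:=M_{\bar r}$. I read $b^*$ in the statement as this baseline $\bar r$, as in the hypothesis.

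\textbf{Rewrite both terms through $K$ and $\sigma_\pi$.} Write $m:=\mathbb{E}_{d^\pi}[\sigma_\pi]$, so that $\mu_{\mathrm{var}}=d^\pi\sigma_\pi/m$. Using $M=K\sigma_\pi^2$ from \eqref{eq:K-def}, the two terms are
\[
\mathbb{E}_{d^\pi}\!\left[\tfrac{d^\pi}{\mu_{\mathrm{var}}}M\right]=m\,\mathbb{E}_{d^\pi}[K\sigma_\pi],
\qquad
\mathbb{E}_{d^\pi}[M]=\mathbb{E}_{d^\pi}[K\sigma_\pi^2].
\]
The goal is therefore $\Delta:=\mathbb{E}[K\sigma_\pi^2]-m\,\mathbb{E}[K\sigma_\pi]>0$, with all moments under $d^\pi$ on $\{\sigma_\pi>0\}$.

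\textbf{Split off the mean of $K$.} Set $\bar K:=\mathbb{E}[K]$ and substitute $K=\bar K+(K-\bar K)$. The constant part contributes $\bar K\,\mathrm{Var}(\sigma_\pi)$, which is exactly the clean-case gap \eqref{eq:clean-gap}. The fluctuation part equals $\mathrm{Cov}(K,\sigma_\pi^2)-m\,\mathrm{Cov}(K,\sigma_\pi)$, which can be written as $\mathrm{Cov}(K,f(\sigma_\pi))$ with $f(\sigma):=\sigma(\sigma-m)$. Hence
\[
\Delta=\bar K\,\mathrm{Var}(\sigma_\pi)+\mathrm{Cov}\big(K,f(\sigma_\pi)\big).
\]

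\textbf{Bound the cross term by Cauchy--Schwarz.} Cauchy--Schwarz gives $|\mathrm{Cov}(K,f(\sigma_\pi))|\le \mathrm{sd}(K)\,\mathrm{sd}(f(\sigma_\pi))$. This is the only step where $r\in\{0,1\}$ is used, so I expect it to be the main obstacle. Binary rewards give $\sigma_\pi=\sqrt{\bar r(1-\bar r)}\in[0,1/2]$, and hence $m\in[0,1/2]$. On this interval $|f'(\sigma)|=|2\sigma-m|\le 1$, so $f$ is $1$-Lipschitz. It follows that
\[
\mathrm{Var}(f(\sigma_\pi))\le \mathbb{E}\big[(f(\sigma_\pi)-f(m))^2\big]\le \mathbb{E}\big[(\sigma_\pi-m)^2\big]=\mathrm{Var}(\sigma_\pi).
\]
If a sharper Lipschitz bound is unavailable, a cruder constant such as $3/2$ would still suffice, which is presumably where the factor $2/3$ in \eqref{eq:G-prime} comes from.

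\textbf{Conclude.} Combining the two steps,
\[
\Delta\ \ge\ \mathrm{sd}(\sigma_\pi)\big(\bar K\,\mathrm{sd}(\sigma_\pi)-\mathrm{sd}(K)\big).
\]
Assumption~\ref{ass:G-prime} gives $\mathrm{sd}(K)<\tfrac23\bar K\,\mathrm{sd}(\sigma_\pi)$. Since $K\ge 0$, the inequality can only hold if $\bar K>0$ and $\mathrm{sd}(\sigma_\pi)>0$, so the strict inequality also guarantees that $\sigma_\pi$ is non-constant. Therefore $\Delta\ge \tfrac13\bar K\,\mathrm{Var}(\sigma_\pi)>0$, which is \eqref{eq:weak-gap}.
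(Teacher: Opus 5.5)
Your proof is correct, and its skeleton is the paper's: cancelling $\|\nabla J(\pi)\|^2$ and splitting $\Delta=\bar K\,\mathrm{Var}(\sigma_\pi)+\mathrm{Cov}(K,f(\sigma_\pi))$ is exactly the paper's gap-decomposition lemma, with your covariance term equal to $-\Delta_{\mathrm{coup}}$.

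Where you depart is the bound on the coupling term. The paper treats $\mathbb{E}[\sigma_\pi]\,\mathrm{Cov}(\sigma_\pi,K)$ and $\mathrm{Cov}(\sigma_\pi^2,K)$ separately and applies Cauchy--Schwarz to each. For the first it uses $\mathbb{E}[\sigma_\pi]\le\tfrac12$. For the second it uses a separate variance-of-square lemma: $\mathrm{Var}(X^2)\le 4M^2\,\mathrm{Var}(X)$ for $X\in[0,M]$, which gives $\mathrm{Var}(\sigma_\pi^2)\le\mathrm{Var}(\sigma_\pi)$. It then adds the two bounds via the triangle inequality. The resulting $\tfrac12+1=\tfrac32$ is precisely the origin of the $2/3$ in Assumption~\ref{ass:G-prime}, as you suspected.

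You instead keep the two pieces as a single covariance against $f(\sigma)=\sigma(\sigma-m)$ and use one Lipschitz estimate. This avoids the triangle-inequality loss and gives constant $1$. Your argument therefore proves the theorem under the weaker condition $\mathrm{sd}(K)/\mathbb{E}[K]<\mathrm{sd}(\sigma_\pi)$. Under the stated assumption it also gives the explicit margin $\tfrac13\bar K\,\mathrm{Var}(\sigma_\pi)$.

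Your route tightens further. Since $f(m)=0$, you have $|f(\sigma_\pi)|=\sigma_\pi|\sigma_\pi-m|\le\tfrac12|\sigma_\pi-m|$, so $\mathrm{sd}(f(\sigma_\pi))\le\tfrac12\,\mathrm{sd}(\sigma_\pi)$.

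The paper's split buys only modularity: its variance-of-square lemma is reused for the bounded-reward remark. Your argument extends equally well to $r\in[0,R]$, with constants scaled by $R$. You also make explicit that the strict inequality in the assumption forces $\mathbb{E}[K]>0$ and $\mathrm{Var}(\sigma_\pi)>0$, which the paper's final ``iff'' step uses tacitly.
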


The proof proceeds via an exact decomposition of the variance gap, isolating
a non-positive Jensen-type term and a coupling correction that vanishes
under Assumption~\ref{ass:G}.

\begin{lemma}[Variance gap decomposition]
\label{lem:gap}
Let $b=\bar r$. Then, with all expectations, variances, and covariances taken
with respect to $s\sim d^\pi$ restricted to $\{\sigma_\pi>0\}$,
\begin{equation}
\label{eq:gap}
V(\mu_{\mathrm{var}};b^*) - V(d^\pi;b^*)
  \;=\; -\,\mathbb{E}[K]\cdot\mathrm{Var}(\sigma_\pi) \;+\; \Delta_{\mathrm{coup}},
\end{equation}
where $\Delta_{\mathrm{coup}} := \mathbb{E}[\sigma_\pi]\,\mathrm{Cov}(\sigma_\pi,K) - \mathrm{Cov}(\sigma_\pi^2,K)$.
\end{lemma}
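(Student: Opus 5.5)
The plan is to compute both variances explicitly from the working form in Lemma~\ref{lem:V-form} and subtract them; the constant $\|\nabla J(\pi)\|^2$ cancels in the difference. First I write everything in terms of $K$. With $b=\bar r$ we have $M(s)=M_{\bar r}(s)=\mathbb{E}[(r-\bar r(s))^2H\mid s]$. By the definition~\eqref{eq:K-def}, on $\{\sigma_\pi>0\}$ this becomes $M(s)=K(s)\,\sigma_\pi(s)^2$. This step needs no structural assumption, because $K$ is defined to absorb the coupling.

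Next I evaluate the two proposals.

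\emph{On-policy proposal.} For $\mu=d^\pi$ the density ratio is $1$, so the first term of~\eqref{eq:V-form} is $\mathbb{E}_{d^\pi}[M]=\mathbb{E}[K\sigma_\pi^2]$.

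\emph{Variance-driven proposal.} For $\mu_{\mathrm{var}}$, normalization gives $\mu_{\mathrm{var}}(s)=d^\pi(s)\sigma_\pi(s)/\mathbb{E}_{d^\pi}[\sigma_\pi]$. The density ratio is therefore $d^\pi/\mu_{\mathrm{var}}=\mathbb{E}[\sigma_\pi]/\sigma_\pi(s)$, and the first term becomes
\[
\mathbb{E}\!\left[\frac{\mathbb{E}[\sigma_\pi]}{\sigma_\pi}\,K\sigma_\pi^2\right]=\mathbb{E}[\sigma_\pi]\,\mathbb{E}[K\sigma_\pi].
\]
Subtracting the two gives
\[
V(\mu_{\mathrm{var}};\bar r)-V(d^\pi;\bar r)=\mathbb{E}[\sigma_\pi]\,\mathbb{E}[K\sigma_\pi]-\mathbb{E}[K\sigma_\pi^2].
\]

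The remaining step is purely algebraic. I expand each product expectation into a covariance plus a product of means:
\[
\mathbb{E}[K\sigma_\pi]=\mathrm{Cov}(\sigma_\pi,K)+\mathbb{E}[\sigma_\pi]\mathbb{E}[K],
\qquad
\mathbb{E}[K\sigma_\pi^2]=\mathrm{Cov}(\sigma_\pi^2,K)+\mathbb{E}[\sigma_\pi^2]\mathbb{E}[K].
\]
Substituting and collecting terms, the $\mathbb{E}[K]$ terms combine to $\mathbb{E}[K]\big(\mathbb{E}[\sigma_\pi]^2-\mathbb{E}[\sigma_\pi^2]\big)=-\mathbb{E}[K]\,\mathrm{Var}(\sigma_\pi)$. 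The leftover terms are exactly $\mathbb{E}[\sigma_\pi]\mathrm{Cov}(\sigma_\pi,K)-\mathrm{Cov}(\sigma_\pi^2,K)=\Delta_{\mathrm{coup}}$, which is~\eqref{eq:gap}.

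The main point needing care is the bookkeeping of supports and normalization, not the algebra. The decomposition is stated with expectations under $d^\pi$ restricted to $\{\sigma_\pi>0\}$. Under the subsection's standing assumption that $\sigma_\pi>0$ holds $d^\pi$-a.e., the restricted and unrestricted measures coincide, and the computation above is exact.

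If one instead works on the effective support with $d^\pi$-mass $m<1$, I would argue as follows. States with $\sigma_\pi=0$ have $r$ a.s. constant, so $r-\bar r=0$ and $M=0$; they contribute nothing to either variance. Both sides then scale by the same factor of $m$ once the normalizations are tracked explicitly, as the text indicates. I would also state that $b^*$ in~\eqref{eq:gap} denotes $b=\bar r$, which is the baseline for which $M=K\sigma_\pi^2$ holds.
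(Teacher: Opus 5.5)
Your main argument is correct and is the same computation as the paper's proof. You use the density ratio $\mathbb{E}[\sigma_\pi]/\sigma_\pi$, substitute $M=K\sigma_\pi^2$, and expand $\mathbb{E}[\sigma_\pi K]$ and $\mathbb{E}[\sigma_\pi^2K]$ into means plus covariances, which is valid under the standing assumption that $\sigma_\pi>0$ holds $d^\pi$-a.e.

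Your closing aside for the case $m:=d^\pi(\{\sigma_\pi>0\})<1$ is wrong, though. Write $\widetilde{\mathbb{E}}$ for expectation under $d^\pi$ conditioned on $\{\sigma_\pi>0\}$. The normalizer of $\mu_{\mathrm{var}}$ is then $\mathbb{E}_{d^\pi}[\sigma_\pi]=m\,\widetilde{\mathbb{E}}[\sigma_\pi]$, so
\[
V(\mu_{\mathrm{var}};\bar r)-V(d^\pi;\bar r)=m^2\,\widetilde{\mathbb{E}}[\sigma_\pi]\,\widetilde{\mathbb{E}}[K\sigma_\pi]-m\,\widetilde{\mathbb{E}}[K\sigma_\pi^2].
\]
This equals $m$ times the restricted right-hand side of~\eqref{eq:gap} plus the extra term $-m(1-m)\,\widetilde{\mathbb{E}}[\sigma_\pi]\,\widetilde{\mathbb{E}}[K\sigma_\pi]$. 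It is therefore not a pure rescaling by $m$. For example, take two states of mass $1/2$, one with $\sigma_\pi=1/2$, $K=1$ and one with $\sigma_\pi=0$: the gap is $-1/16$, while the restricted right-hand side is $0$.

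The extra term is non-positive, so it would only strengthen Theorem~\ref{thm:weak}. Still, you should either drop the aside and rely on the standing assumption, as the paper does, or state this corrected identity instead.
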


\begin{proof}
Let $Z:=\mathbb{E}_{d^\pi}[\sigma_\pi]>0$, so $\mu_{\mathrm{var}}=d^\pi\sigma_\pi/Z$.
By Lemma~\ref{lem:V-form}, the $\|\nabla J\|^2$ terms cancel and
\[
V(\mu_{\mathrm{var}};b^*)-V(d^\pi;b^*)
= Z\,\mathbb{E}_{d^\pi}\!\left[\frac{M(s)}{\sigma_\pi(s)}\right] - \mathbb{E}_{d^\pi}[M(s)].
\]
Substituting $M=\sigma_\pi^2 K$,
\[
= \mathbb{E}[\sigma_\pi]\,\mathbb{E}[\sigma_\pi K] - \mathbb{E}[\sigma_\pi^2 K].
\]
Apply $\mathbb{E}[XY]=\mathbb{E}[X]\mathbb{E}[Y]+\mathrm{Cov}(X,Y)$ to both
products:
\begin{align*}
\mathbb{E}[\sigma_\pi]\,\mathbb{E}[\sigma_\pi K]
   &= \mathbb{E}[\sigma_\pi]^2\,\mathbb{E}[K] + \mathbb{E}[\sigma_\pi]\,\mathrm{Cov}(\sigma_\pi,K),\\
\mathbb{E}[\sigma_\pi^2 K]
   &= \mathbb{E}[\sigma_\pi^2]\,\mathbb{E}[K] + \mathrm{Cov}(\sigma_\pi^2,K).
\end{align*}
Subtracting and using
$\mathrm{Var}(\sigma_\pi)=\mathbb{E}[\sigma_\pi^2]-\mathbb{E}[\sigma_\pi]^2$
gives~\eqref{eq:gap}.
\end{proof}

\begin{lemma}[Variance-of-square bound]
\label{lem:var-square}
For any random variable $X$ with values in $[0,M]$,
$\mathrm{Var}(X^2)\le 4M^2\,\mathrm{Var}(X)$.
\end{lemma}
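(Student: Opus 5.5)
The statement to prove is Lemma~\ref{lem:var-square}: for $X\in[0,M]$, $\mathrm{Var}(X^2)\le 4M^2\,\mathrm{Var}(X)$. My approach is a Lipschitz/coupling argument. I will use an independent copy of $X$ rather than expanding moments directly, because that representation makes the Lipschitz constant of $x\mapsto x^2$ on $[0,M]$ appear immediately.

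\textbf{Step 1: symmetrized form of the variance.} Let $X'$ be an independent copy of $X$. For any square-integrable $Y$ with independent copy $Y'$,
\[
\mathrm{Var}(Y)=\tfrac12\,\mathbb{E}\big[(Y-Y')^2\big].
\]
Applying this to $Y=X^2$ gives
\[
\mathrm{Var}(X^2)=\tfrac12\,\mathbb{E}\big[(X^2-X'^2)^2\big].
\]

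\textbf{Step 2: pointwise bound.} Factor the difference of squares, $X^2-X'^2=(X-X')(X+X')$. Since $X,X'\in[0,M]$, we have $0\le X+X'\le 2M$, and therefore
\[
(X^2-X'^2)^2\le 4M^2\,(X-X')^2 \quad\text{pointwise}.
\]

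\textbf{Step 3: take expectations.} Taking expectations of the pointwise bound and reusing the identity from Step 1 for $X$ itself gives
\[
\mathrm{Var}(X^2)\le \tfrac12\cdot 4M^2\,\mathbb{E}\big[(X-X')^2\big]=4M^2\,\mathrm{Var}(X).
\]

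There is no real obstacle. The only point needing care is the independent-copy identity in Step 1, which follows from expanding $\mathbb{E}[(Y-Y')^2]=2\mathbb{E}[Y^2]-2\mathbb{E}[Y]^2$. Boundedness of $X$ guarantees all moments are finite, so the expansion is legitimate.

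This argument actually yields the constant $4M^2$ directly. If I instead wanted to avoid the coupling, I would fall back on $\mathrm{Var}(f(X))\le \mathbb{E}[(f(X)-f(\mathbb{E}X))^2]$ together with the $2M$-Lipschitz property of $f(x)=x^2$ on $[0,M]$. That route gives the same constant, since $\mathbb{E}X\in[0,M]$ keeps $\mathbb{E}X$ inside the interval where the Lipschitz bound holds.
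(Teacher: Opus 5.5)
Your proposal is correct and follows essentially the same argument as the paper: the independent-copy identity $\mathrm{Var}(X^2)=\tfrac12\mathbb{E}[(X^2-X'^2)^2]$, factoring as $(X+X')^2(X-X')^2$, and bounding $(X+X')^2\le 4M^2$. Your fallback route via $\mathrm{Var}(f(X))\le\mathbb{E}[(f(X)-f(\mathbb{E}X))^2]$ and the $2M$-Lipschitz property of $x\mapsto x^2$ on $[0,M]$ is also valid and gives the same constant.
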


\begin{proof}
Let $X'$ be an independent copy of $X$. Then
\[
\mathrm{Var}(X^2)
  = \tfrac12\,\mathbb{E}[(X^2-X'^2)^2]
  = \tfrac12\,\mathbb{E}[(X+X')^2(X-X')^2]
  \le 2M^2\,\mathbb{E}[(X-X')^2]
  = 4M^2\,\mathrm{Var}(X),
\]
where the inequality uses $0\le X+X'\le 2M$ almost surely.
\end{proof}

\begin{proof}[Proof of Theorem~\ref{thm:weak}]
By Lemma~\ref{lem:gap}, it suffices to show
$\mathbb{E}[K]\,\mathrm{Var}(\sigma_\pi) > \Delta_{\mathrm{coup}}$. We bound
$|\Delta_{\mathrm{coup}}|$ from above.

By Cauchy--Schwarz,
\[
|\mathrm{Cov}(\sigma_\pi,K)|\le\sqrt{\mathrm{Var}(\sigma_\pi)\mathrm{Var}(K)},
\qquad
|\mathrm{Cov}(\sigma_\pi^2,K)|\le\sqrt{\mathrm{Var}(\sigma_\pi^2)\mathrm{Var}(K)}.
\]
For binary rewards, $\sigma_\pi(s)\le 1/2$ structurally, so
Lemma~\ref{lem:var-square} with $M=1/2$ gives
$\mathrm{Var}(\sigma_\pi^2)\le\mathrm{Var}(\sigma_\pi)$ and hence
$|\mathrm{Cov}(\sigma_\pi^2,K)|\le\sqrt{\mathrm{Var}(\sigma_\pi)\mathrm{Var}(K)}$.
Combined with $\mathbb{E}[\sigma_\pi]\le 1/2$,
\begin{equation}
\label{eq:coup-bound}
|\Delta_{\mathrm{coup}}|
\;\le\; \tfrac12\sqrt{\mathrm{Var}(\sigma_\pi)\mathrm{Var}(K)}
       + \sqrt{\mathrm{Var}(\sigma_\pi)\mathrm{Var}(K)}
\;=\; \tfrac32\sqrt{\mathrm{Var}(\sigma_\pi)\mathrm{Var}(K)}.
\end{equation}
Plugging~\eqref{eq:coup-bound} into~\eqref{eq:gap},
\[
V(\mu_{\mathrm{var}};b^*) - V(d^\pi;b^*)
\;\le\; -\,\mathbb{E}[K]\,\mathrm{Var}(\sigma_\pi) + \tfrac32\sqrt{\mathrm{Var}(\sigma_\pi)\mathrm{Var}(K)}.
\]
The right-hand side is negative iff
$\mathbb{E}[K]\sqrt{\mathrm{Var}(\sigma_\pi)} > \tfrac32\sqrt{\mathrm{Var}(K)}$,
i.e.\ iff $\sqrt{\mathrm{Var}(K)}/\mathbb{E}[K] < \tfrac23\sqrt{\mathrm{Var}(\sigma_\pi)}$,
which is exactly Assumption~\ref{ass:G-prime}.
\end{proof}

\begin{remark}[Beyond binary rewards]
\label{rem:dense}
For $r\in[0,R]$, $\sigma_\pi(s)\le R/2$ and Lemma~\ref{lem:var-square} gives
$\mathrm{Var}(\sigma_\pi^2)\le R^2\mathrm{Var}(\sigma_\pi)$. The proof of
Theorem~\ref{thm:weak} carries over with constants scaled accordingly:
$|\Delta_{\mathrm{coup}}|\le\frac{3R}{2}\sqrt{\mathrm{Var}(\sigma_\pi)\mathrm{Var}(K)}$,
and Assumption~\ref{ass:G-prime} becomes
$\sqrt{\mathrm{Var}(K)}/\mathbb{E}[K]<\tfrac{2}{3R}\sqrt{\mathrm{Var}(\sigma_\pi)}$.
\end{remark}

\section{State-Visitation Policy Mirror Descent: Theory}
\label{app:svmd}

This appendix analyzes an idealized algorithm---State-Visitation Policy Mirror
Descent (SVMD)---that samples states from a smoothed version of $d^\pi\sigma$
and uses an explicit critic fitted by importance-weighted regression. SVMD is
\emph{not} the GRPO-based algorithm of the main text: it assumes an oracle for
$\sigma_h^\pi(s)$ and replaces GRPO's group-relative advantage with a fitted
$Q$. We include it as theoretical evidence that proposals of the form
$d^\pi\sigma$ are well-behaved in a tractable PMD setting; a direct
finite-sample analysis of the practical algorithm is left to future work.

\subsection{Setup}

We consider a finite-horizon episodic MDP with horizon $H$, state space
$\mathcal{S}$, action space $\mathcal{A}$, transition kernels
$P_h(\cdot\mid s,a)$, reward functions $r_h(s,a)\in[0,1]$, and initial state
distribution $\rho$. A (non-stationary) policy is
$\pi=(\pi_1,\dots,\pi_H)$. Define
\[
V_h^\pi(s):=\mathbb{E}_\pi\!\left[\sum_{\ell=h}^H r_\ell(s_\ell,a_\ell)\,\Big|\, s_h=s\right],\qquad
Q_h^\pi(s,a):=r_h(s,a)+\mathbb{E}_{s'\sim P_h(\cdot\mid s,a)}[V_{h+1}^\pi(s')],
\]
\[
A_h^\pi(s,a):=Q_h^\pi(s,a)-V_h^\pi(s),\qquad
J(\pi):=\mathbb{E}_{s_1\sim\rho}[V_1^\pi(s_1)].
\]
Let $d_h^\pi$ denote the stage-$h$ state occupancy of $\pi$, and let
$\nu_h^\pi(s,a):=d_h^\pi(s)\pi_h(a\mid s)$ be the stage-$h$ state-action
occupancy. Throughout the appendix, $\sigma_h^\pi:\mathcal{S}\to\mathbb{R}_{\geq 0}$
denotes a fixed nonnegative \emph{state-score function} associated with policy
$\pi$ at stage $h$ (e.g.\ $\sigma_h^\pi(s)=\sqrt{\mathrm{Var}_{a\sim\pi_h(\cdot\mid s)}(Q_h^\pi(s,a))}$).
Our analysis does not depend on the specific form of $\sigma$.

\subsection{Algorithm}

\begin{algorithm}[t]
\caption{State-Visitation Policy Mirror Descent (SVMD)}
\label{alg:svmd}
\begin{algorithmic}[1]
\Require Initial policy $\pi_1$, horizon $H$, iterations $T$, per-stage state
budgets $\{N_h\}_{h=1}^H$, step size $\eta>0$, smoothing parameter
$\varepsilon\in(0,1]$, state-score oracle $\sigma_h^\pi(\cdot)$, stagewise
critic classes $\{\mathcal{F}_h\}_{h=1}^H$.
\For{$t=1,\dots,T$}
    \For{$h=1,\dots,H$}
        \State Define the proposal
        \[
        \widetilde\mu_h^t(s)\propto d_h^{\pi_t}(s)\sigma_h^{\pi_t}(s),
        \qquad
        \mu_h^t(s):=(1-\varepsilon)\widetilde\mu_h^t(s)+\varepsilon\, d_h^{\pi_t}(s),
        \]
        and importance weights $w_{h}^t(s):=d_h^{\pi_t}(s)/\mu_h^t(s)\in[0,1/\varepsilon]$.
        \State Sample i.i.d.\ data $\{(s_{h,i},a_{h,i},G_{h,i})\}_{i=1}^{N_h}$:
        $s_{h,i}\sim\mu_h^t$, $a_{h,i}\sim\pi_{t,h}(\cdot\mid s_{h,i})$, and
        $G_{h,i}:=\sum_{\ell=h}^H r_\ell(s_\ell,a_\ell)$ is the Monte Carlo
        return-to-go obtained by rolling out $\pi_t$ from $(s_{h,i},a_{h,i})$.
        \State \emph{Weighted critic fit:}
        \[
        \widehat Q_h^t\in\arg\min_{f\in\mathcal{F}_h}
        \sum_{i=1}^{N_h} w_h^t(s_{h,i})\bigl(f(s_{h,i},a_{h,i})-G_{h,i}\bigr)^2.
        \]
        \State \emph{KL-regularized mirror ascent:} for each $s\in\mathcal{S}$,
        \[
        \pi_{t+1,h}(\cdot\mid s)\leftarrow
        \arg\max_{p\in\Delta(\mathcal{A})}
        \Bigl\{\eta\,\langle\widehat Q_h^t(s,\cdot),\,p\rangle
        -\mathrm{KL}\!\bigl(p\,\big\|\,\pi_{t,h}(\cdot\mid s)\bigr)\Bigr\}.
        \]
    \EndFor
\EndFor
\State \Return uniform mixture $\bar\pi_T:=\mathrm{Unif}\{\pi_1,\dots,\pi_T\}$.
\end{algorithmic}
\end{algorithm}

The mirror update has the closed form
$\pi_{t+1,h}(a\mid s)\propto\pi_{t,h}(a\mid s)\exp(\eta\widehat Q_h^t(s,a))$,
i.e.\ a softmax NPG step.

\subsection{Assumptions}

\begin{assumption}[Bounded rewards and bounded critic class]
\label{asm:bdd}
For all $h,s,a$, $r_h(s,a)\in[0,1]$, hence $0\leq Q_h^\pi(s,a)\leq H$ for all
$\pi,h,s,a$. Moreover $\mathcal{F}_h\subseteq\{f:\mathcal{S}\times\mathcal{A}\to[0,H]\}$.
\end{assumption}

\begin{assumption}[Stagewise realizability]
\label{asm:real}
For every iteration $t\in[T]$ and stage $h\in[H]$, $Q_h^{\pi_t}\in\mathcal{F}_h$.
\end{assumption}

\begin{assumption}[Comparator coverage]
\label{asm:cov}
There exist $\kappa\geq 1$ and a comparator policy $\mu$ such that for all
$t\in[T]$, $h\in[H]$, $(s,a)$,
\[
\nu_h^\mu(s,a)\leq \kappa\,\nu_h^{\pi_t}(s,a).
\]
\end{assumption}

\begin{assumption}[Proposal smoothing]
\label{asm:smooth}
For every $t\in[T]$, $h\in[H]$, $s\in\mathcal{S}$,
$\mu_h^t(s)\geq \varepsilon\, d_h^{\pi_t}(s)$, equivalently
$d_h^{\pi_t}(s)/\mu_h^t(s)\leq 1/\varepsilon$. (This holds by construction of
$\mu_h^t$ in Algorithm~\ref{alg:svmd}.)
\end{assumption}

\subsection{Main Theorem}

\begin{theorem}[Finite-sample guarantee for SVMD]
\label{thm:svmd}
Under Assumptions~\ref{asm:bdd}--\ref{asm:smooth}, fix $\delta\in(0,1)$. With
probability at least $1-\delta$, the policy mixture $\bar\pi_T$ returned by
Algorithm~\ref{alg:svmd} satisfies
\[
J(\mu)-J(\bar\pi_T)\;\leq\;
\frac{D_\mu}{\eta T}\;+\;\frac{\eta H^3}{2}\;+\;2\kappa\sum_{h=1}^H \beta_h,
\]
where
\[
D_\mu:=\sum_{h=1}^H \mathbb{E}_{s\sim d_h^\mu}\!\left[\mathrm{KL}\!\bigl(\mu_h(\cdot\mid s)\,\big\|\,\pi_{1,h}(\cdot\mid s)\bigr)\right],
\qquad
\beta_h:=\sqrt{\,C\cdot\frac{H^2}{\varepsilon}\cdot\frac{\log(HT|\mathcal{F}_h|/\delta)}{N_h}\,},
\]
for a universal constant $C>0$. In particular, choosing
$\eta=\sqrt{2D_\mu/(H^3 T)}$ yields
\[
J(\mu)-J(\bar\pi_T)\;\leq\;\sqrt{\frac{2D_\mu H^3}{T}}\;+\;2\kappa\sum_{h=1}^H \beta_h.
\]
\end{theorem}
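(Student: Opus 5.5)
The argument I would give is the standard three-part analysis of approximate policy mirror descent: a performance-difference identity, a per-state mirror-ascent regret bound, and a critic-error term controlled by weighted least squares under smoothing. The error term is then transferred to the comparator's occupancy via coverage.

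\textbf{Decomposition.} By the performance difference lemma, for each $t$,
\[
J(\mu)-J(\pi_t)=\sum_{h=1}^H \mathbb{E}_{s\sim d_h^\mu}\bigl[\langle Q_h^{\pi_t}(s,\cdot),\,\mu_h(\cdot\mid s)-\pi_{t,h}(\cdot\mid s)\rangle\bigr].
\]
Write $Q_h^{\pi_t}=\widehat Q_h^t+(Q_h^{\pi_t}-\widehat Q_h^t)$. This splits the sum into an optimization term in $\widehat Q_h^t$ and an error term in $e_h^t:=Q_h^{\pi_t}-\widehat Q_h^t$.

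\textbf{Optimization term.} Fix $h$ and $s$. The update is exponential weights with payoffs $\widehat Q_h^t(s,\cdot)\in[0,H]$. The standard one-step mirror ascent inequality gives
\[
\eta\langle \widehat Q_h^t,\mu_h-\pi_{t,h}\rangle\le \mathrm{KL}(\mu_h\|\pi_{t,h})-\mathrm{KL}(\mu_h\|\pi_{t+1,h})+\tfrac{\eta^2H^2}{2}.
\]
The last term comes from Hoeffding's lemma, since $\log\mathbb{E}_{\pi_t}e^{\eta\widehat Q}-\eta\mathbb{E}_{\pi_t}\widehat Q\le \eta^2H^2/8$; the constant $1/2$ is safe. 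Average over $d_h^\mu$, which does not depend on $t$, telescope over $t$, sum over $h$, and divide by $\eta T$. This gives $D_\mu/(\eta T)+\eta H^3/2$.

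\textbf{Error term.} Bound $\mathbb{E}_{d_h^\mu}\langle e_h^t,\mu_h-\pi_{t,h}\rangle$ in absolute value by
\[
\mathbb{E}_{\nu_h^\mu}|e_h^t|+\mathbb{E}_{d_h^\mu\otimes\pi_{t,h}}|e_h^t|.
\]
Assumption~\ref{asm:cov} gives $\nu_h^\mu\le\kappa\nu_h^{\pi_t}$ pointwise. Marginalizing over actions, it also gives $d_h^\mu\le\kappa d_h^{\pi_t}$, so $d_h^\mu\otimes\pi_{t,h}\le\kappa\nu_h^{\pi_t}$. Hence the error is at most $2\kappa\,\mathbb{E}_{\nu_h^{\pi_t}}|e_h^t|\le 2\kappa\|e_h^t\|_{L^2(\nu_h^{\pi_t})}$ by Jensen. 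It remains to show $\|\widehat Q_h^t-Q_h^{\pi_t}\|_{L^2(\nu_h^{\pi_t})}\le\beta_h$ uniformly in $t,h$ with probability $1-\delta$.

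\textbf{Main obstacle: the weighted regression bound.} The weighted empirical risk has population version
\[
\mathbb{E}_{s\sim\mu_h^t,\,a\sim\pi_t}\bigl[w(f-G)^2\bigr]=\mathbb{E}_{\nu_h^{\pi_t}}(f-G)^2,
\]
so the importance weight converts the $\mu_h^t$ sample into an unbiased $\nu_h^{\pi_t}$ risk. Also $\mathbb{E}[G\mid s,a]=Q_h^{\pi_t}$, so by realizability (Assumption~\ref{asm:real}) the excess risk equals $\|f-Q_h^{\pi_t}\|^2_{L^2(\nu_h^{\pi_t})}$.

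I would apply Bernstein's inequality to the excess-loss variable
\[
X_f=w\bigl[(f-G)^2-(Q-G)^2\bigr]=w\,(f-Q)(f+Q-2G).
\]
It is bounded by $4H^2/\varepsilon$ using $w\le1/\varepsilon$ (Assumption~\ref{asm:smooth}) and Assumption~\ref{asm:bdd}. Its second moment is at most
\[
\mathbb{E}_\mu[w^2(f-Q)^2\cdot 4H^2]\le\tfrac{4H^2}{\varepsilon}\,\mathbb{E}_{\nu}(f-Q)^2,
\]
which is the key self-bounding variance control. Union bound over $f\in\mathcal F_h$, $t\in[T]$, $h\in[H]$. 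The ERM has nonpositive empirical excess risk, so a quadratic inequality in $\|f-Q\|_{L^2(\nu)}$ gives
\[
\|\widehat Q_h^t-Q\|^2_{L^2(\nu)}\lesssim \frac{H^2\log(HT|\mathcal F_h|/\delta)}{\varepsilon N_h},
\]
that is, $\beta_h$. The care needed here is to keep the $1/\varepsilon$ factor and not $1/\varepsilon^2$, which relies on the variance-to-mean argument above.

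\textbf{Combining.} Average over $t$ and use $J(\bar\pi_T)=\frac1T\sum_t J(\pi_t)$. This yields the stated bound, and choosing $\eta$ to balance the first two terms gives the second display.
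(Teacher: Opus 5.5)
Your proposal is correct and follows the paper's proof essentially step for step. The shared steps are: the performance difference lemma with $\widehat Q_h^t$ added and subtracted, the per-state mirror-ascent inequality telescoped over $t$, coverage (marginalized to $d_h^\mu\le\kappa d_h^{\pi_t}$) giving the factor $2\kappa$, and a weighted-ERM critic bound union-bounded over $(t,h)$. The only differences are local: you bound the drift with Hoeffding's lemma on the log-partition (giving $\eta^2H^2/8$) rather than the paper's three-point identity plus Pinsker, and you prove the critic bound explicitly via Bernstein with the self-bounding variance $\mathbb{E}[X_f^2]\le (4H^2/\varepsilon)\|f-Q_h^{\pi_t}\|^2_{L^2(\nu_h^{\pi_t})}$. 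That variance bound is exactly what justifies the $1/\varepsilon$ (rather than $1/\varepsilon^2$) rate, which the paper delegates to a citation of realizable-case ERM bounds.
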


\subsection{Proof Outline}

The proof has three modular parts.
\begin{enumerate}[leftmargin=1.5em]
\item \textbf{Per-step mirror descent inequality} (Lemma~\ref{lem:mirror}).
The KL-regularized update yields, stagewise and per iteration, a bound on
$\eta\langle\widehat Q_h^t,\mu_h-\pi_{t,h}\rangle$ in terms of the progress
$\mathrm{KL}(\mu_h\|\pi_{t,h})-\mathrm{KL}(\mu_h\|\pi_{t+1,h})$ plus a
quadratic drift term.
\item \textbf{Telescoping and averaging.} Summing the per-step inequality
over $t$ and $h$ and dividing by $T$ controls the average inner product
between $\widehat Q_h^t$ and $\mu_h-\pi_{t,h}$.
\item \textbf{Critic error control} (Lemma~\ref{lem:critic}).
The performance difference lemma reduces $J(\mu)-J(\pi_t)$ to the same inner
product plus a critic error $\mathcal{E}_{h,t}$. Coverage
(Assumption~\ref{asm:cov}) bounds $|\mathcal{E}_{h,t}|$ by an on-policy $L^1$
critic error, which is in turn controlled by weighted ERM analysis under
realizability and proposal smoothing.
\end{enumerate}

\subsection{Lemmas}

\begin{lemma}[Performance difference]
\label{lem:pdl}
For any policies $\pi,\mu$,
\[
J(\mu)-J(\pi)
=\sum_{h=1}^H\mathbb{E}_{s\sim d_h^\mu}\!\Bigl[\bigl\langle Q_h^\pi(s,\cdot),\,
\mu_h(\cdot\mid s)-\pi_h(\cdot\mid s)\bigr\rangle\Bigr].
\]
\end{lemma}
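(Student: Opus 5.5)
The plan is the standard finite-horizon performance difference argument: telescope the value difference along trajectories of $\mu$. Fix policies $\pi,\mu$, and write $J(\mu)-J(\pi)=\mathbb{E}_{s_1\sim\rho}[V_1^\mu(s_1)-V_1^\pi(s_1)]$. The key one-step identity I will establish first is that, for every stage $h$ and state $s$,
\[
V_h^\mu(s)-V_h^\pi(s)=\bigl\langle Q_h^\pi(s,\cdot),\,\mu_h(\cdot\mid s)-\pi_h(\cdot\mid s)\bigr\rangle
+\mathbb{E}_{a\sim\mu_h(\cdot\mid s),\,s'\sim P_h(\cdot\mid s,a)}\bigl[V_{h+1}^\mu(s')-V_{h+1}^\pi(s')\bigr],
\]
with the convention $V_{H+1}^\mu\equiv V_{H+1}^\pi\equiv 0$. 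To prove it, I will use the Bellman equation for $\mu$ to write
\[
V_h^\mu(s)=\mathbb{E}_{a\sim\mu_h}\bigl[r_h(s,a)+\mathbb{E}_{s'}V_{h+1}^\mu(s')\bigr].
\]
I will then add and subtract $\mathbb{E}_{a\sim\mu_h}\mathbb{E}_{s'}V_{h+1}^\pi(s')$. This gives $\mathbb{E}_{a\sim\mu_h}[Q_h^\pi(s,a)]$ plus the next-stage difference term. Finally I will subtract $V_h^\pi(s)=\langle Q_h^\pi(s,\cdot),\pi_h(\cdot\mid s)\rangle$.

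Next I will unroll the recursion. Taking the expectation of the identity over $s\sim d_h^\mu$, the next-stage term becomes an expectation of $V_{h+1}^\mu-V_{h+1}^\pi$ under $s'\sim d_{h+1}^\mu$. This holds by definition of the stage occupancies: $d_1^\mu=\rho$ and $d_{h+1}^\mu(s')=\sum_{s,a}d_h^\mu(s)\mu_h(a\mid s)P_h(s'\mid s,a)$. Inducting from $h=1$ to $h=H$ and using the terminal convention, the value differences telescope. What remains is exactly $\sum_{h=1}^H\mathbb{E}_{s\sim d_h^\mu}\langle Q_h^\pi(s,\cdot),\mu_h-\pi_h\rangle$.

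There is no real obstacle here. The only points needing care are bookkeeping ones. First, the occupancy $d_h^\mu$ must be the unnormalized-per-stage marginal, a probability distribution for each $h$, so that the tower property applies cleanly. Second, the non-stationary indexing of $Q_h$, $P_h$ and $\pi_h$ must be kept consistent. Boundedness from Assumption~\ref{asm:bdd} guarantees that all expectations are finite, which justifies the interchange of sums. Note also that $\mu$ here denotes a policy, not the proposal of the main text.
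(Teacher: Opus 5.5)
Your proof is correct and is the standard finite-horizon performance-difference argument, which the paper does not write out but cites (Kakade and Langford, 2002). You take the one-step Bellman identity for $\mu$, add and subtract $V_{h+1}^\pi$, and propagate it through the stage occupancies $d_{h+1}^\mu(s')=\sum_{s,a}d_h^\mu(s)\mu_h(a\mid s)P_h(s'\mid s,a)$ with $V_{H+1}\equiv 0$; this telescopes to exactly the claimed sum.
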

\begin{proof}
Standard finite-horizon performance difference lemma; see e.g.\ Kakade and
Langford (2002).
\end{proof}

\begin{lemma}[Mirror descent per-step inequality]
\label{lem:mirror}
Fix $t\in[T]$ and $h\in[H]$, and any state $s$. Let
$\pi_{t+1,h}(\cdot\mid s)$ be the mirror update in Algorithm~\ref{alg:svmd}.
Then for any $p\in\Delta(\mathcal{A})$,
\[
\eta\,\bigl\langle\widehat Q_h^t(s,\cdot),\,p-\pi_{t,h}(\cdot\mid s)\bigr\rangle
\;\leq\;
\mathrm{KL}\!\bigl(p\,\big\|\,\pi_{t,h}(\cdot\mid s)\bigr)
-\mathrm{KL}\!\bigl(p\,\big\|\,\pi_{t+1,h}(\cdot\mid s)\bigr)
+\frac{\eta^2 H^2}{2}.
\]
\end{lemma}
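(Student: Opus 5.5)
The plan is to use the standard mirror-descent three-point argument at a single state $s$, with the closed form of the update. Write $q:=\widehat Q_h^t(s,\cdot)\in[0,H]^{\mathcal{A}}$, $\pi:=\pi_{t,h}(\cdot\mid s)$ and $\pi':=\pi_{t+1,h}(\cdot\mid s)$. The closed form is $\pi'(a)=\pi(a)e^{\eta q(a)}/Z$ with $Z:=\sum_a\pi(a)e^{\eta q(a)}$, so $\log(\pi'(a)/\pi(a))=\eta q(a)-\log Z$.

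First I will compute the KL difference directly. For any $p\in\Delta(\mathcal{A})$,
\[
\mathrm{KL}(p\|\pi)-\mathrm{KL}(p\|\pi')=\sum_a p(a)\log\frac{\pi'(a)}{\pi(a)}=\eta\langle q,p\rangle-\log Z .
\]
Rearranging gives
\[
\eta\langle q,p-\pi\rangle=\mathrm{KL}(p\|\pi)-\mathrm{KL}(p\|\pi')+\bigl(\log Z-\eta\langle q,\pi\rangle\bigr).
\]
This is an exact identity, so the whole lemma reduces to bounding the last bracket by $\eta^2H^2/2$.

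Next I will control that bracket with Hoeffding's lemma. We have
\[
\log Z-\eta\,\mathbb{E}_{a\sim\pi}[q(a)]=\log\mathbb{E}_{a\sim\pi}\bigl[e^{\eta(q(a)-\mathbb{E}_\pi q)}\bigr].
\]
This is the log moment generating function of the centered variable $q(a)$, $a\sim\pi$. By Assumption~\ref{asm:bdd}, $\widehat Q_h^t\in\mathcal{F}_h$ takes values in $[0,H]$, so this variable lies in an interval of length $H$. Hoeffding's lemma then bounds the bracket by $\eta^2H^2/8$, which is at most $\eta^2H^2/2$ and yields the claim with room to spare.

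As an alternative giving exactly the stated constant, I can use $e^x\le 1+x+x^2/2\cdot e^{\max(x,0)}$ together with $\log(1+y)\le y$. This route is cruder, though, and I will prefer Hoeffding.

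I expect no real obstacle. The only point needing care is the boundedness of $q$: the lemma must use that the critic class is $[0,H]$-valued (Assumption~\ref{asm:bdd}), not merely that the true $Q^\pi$ is bounded. I also need the supports of $p$ and $\pi$ to be compatible for the KL terms to be finite. Since $\pi'$ has the same support as $\pi$, the identity holds whenever $\mathrm{KL}(p\|\pi)<\infty$, and it is trivial otherwise.
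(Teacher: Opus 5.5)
Your proof is correct, and it takes a different route from the paper's.

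\textbf{The paper's route.} It uses the three-point identity
\[
\mathrm{KL}(p\|\pi_t)-\mathrm{KL}(p\|\pi_{t+1})-\mathrm{KL}(\pi_{t+1}\|\pi_t)=\eta\langle \widehat Q,\,p-\pi_{t+1}\rangle .
\]
It then bounds the drift $\eta\langle\widehat Q,\pi_{t+1}-\pi_t\rangle$ by H\"older, Pinsker and $ab\le a^2/2+b^2/2$, so that the two $\mathrm{KL}(\pi_{t+1}\|\pi_t)$ terms cancel.

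\textbf{Your route.} You read off $\mathrm{KL}(p\|\pi)-\mathrm{KL}(p\|\pi')=\eta\langle q,p\rangle-\log Z$ directly from the closed form. The residual is then the centered log-moment generating function $\log Z-\eta\langle q,\pi\rangle$, which you bound with Hoeffding's lemma.

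\textbf{How the two compare.} Both arguments control the same exact residual. One checks that $\log Z-\eta\langle q,\pi_t\rangle=\mathrm{KL}(\pi_t\|\pi_{t+1})$, and that this also equals the paper's $\eta\langle q,\pi_{t+1}-\pi_t\rangle-\mathrm{KL}(\pi_{t+1}\|\pi_t)$.

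Your version is shorter and uses a single inequality. It also gives the sharper constant $\eta^2H^2/8$. The paper's factor-of-4 loss comes entirely from H\"older with $\|\widehat Q\|_\infty\le H$ rather than the half-range $H/2$; centering $\widehat Q$ there would recover $\eta^2H^2/8$ as well. The paper's version, in turn, is the generic mirror-descent template: a Bregman three-point identity plus strong convexity of the regularizer. It never uses the explicit log-partition function, so it would transfer to other mirror maps or to approximately computed updates.

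\textbf{A correction to your aside.} Applying $e^x\le 1+x+\tfrac{x^2}{2}e^{\max(x,0)}$ to the centered variable only yields $\tfrac{\eta^2H^2}{2}e^{\eta H}$, not the stated constant. To get exactly $\eta^2H^2/2$, first shift so the exponent is nonpositive:
\[
\log Z-\eta\langle q,\pi\rangle=\log\mathbb{E}_\pi\bigl[e^{\eta(q-H)}\bigr]-\eta\,\mathbb{E}_\pi[q-H].
\]
Then use $e^x\le 1+x+x^2/2$ for $x\le 0$, together with $\log(1+y)\le y$. Since your main Hoeffding argument is complete, this does not affect correctness.
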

\begin{proof}
Fix $s$ and write $\pi_t,\pi_{t+1}$ for $\pi_{t,h}(\cdot\mid s),\pi_{t+1,h}(\cdot\mid s)$, $\widehat Q$ for $\widehat Q_h^t(s,\cdot)$.

\emph{Step 1: First-order optimality.}
Since $\pi_{t+1}$ maximizes $\eta\langle\widehat Q,p\rangle-\mathrm{KL}(p\|\pi_t)$ over $p\in\Delta(\mathcal{A})$ in the relative interior, the closed-form solution is
$\pi_{t+1}(a)\propto\pi_t(a)\exp(\eta\widehat Q(a))$, equivalently
\begin{equation}
\log\pi_{t+1}(a)-\log\pi_t(a)=\eta\widehat Q(a)-Z,
\label{eq:closed-form}
\end{equation}
where $Z$ is the (state-dependent) log-normalizer.

\emph{Step 2: Three-point identity.}
For any $p\in\Delta(\mathcal{A})$, direct computation gives the well-known
three-point identity for KL:
\[
\mathrm{KL}(p\|\pi_t)-\mathrm{KL}(p\|\pi_{t+1})-\mathrm{KL}(\pi_{t+1}\|\pi_t)
=\bigl\langle p-\pi_{t+1},\,\log(\pi_{t+1}/\pi_t)\bigr\rangle.
\]
Substituting~\eqref{eq:closed-form}, the right-hand side equals
$\eta\langle p-\pi_{t+1},\widehat Q\rangle$ (the $Z$ term vanishes since $p$ and $\pi_{t+1}$ are both probabilities). Therefore
\begin{equation}
\eta\bigl\langle\widehat Q,\,p-\pi_{t+1}\bigr\rangle
=\mathrm{KL}(p\|\pi_t)-\mathrm{KL}(p\|\pi_{t+1})-\mathrm{KL}(\pi_{t+1}\|\pi_t).
\label{eq:mirror-key}
\end{equation}

\emph{Step 3: Bound the drift $\langle\widehat Q,\pi_{t+1}-\pi_t\rangle$.}
By H\"older and Pinsker,
\[
\eta\bigl\langle\widehat Q,\pi_{t+1}-\pi_t\bigr\rangle
\leq \eta\|\widehat Q\|_\infty\|\pi_{t+1}-\pi_t\|_1
\leq \eta H\sqrt{2\,\mathrm{KL}(\pi_{t+1}\|\pi_t)},
\]
where $\|\widehat Q\|_\infty\leq H$ uses Assumption~\ref{asm:bdd}. By
$ab\leq \tfrac12 a^2+\tfrac12 b^2$ with $a=\eta H$, $b=\sqrt{2\,\mathrm{KL}(\pi_{t+1}\|\pi_t)}$,
\begin{equation}
\eta\bigl\langle\widehat Q,\pi_{t+1}-\pi_t\bigr\rangle
\leq \frac{\eta^2 H^2}{2}+\mathrm{KL}(\pi_{t+1}\|\pi_t).
\label{eq:drift}
\end{equation}

\emph{Step 4: Combine.}
Adding~\eqref{eq:mirror-key} and~\eqref{eq:drift},
\[
\eta\bigl\langle\widehat Q,\,p-\pi_t\bigr\rangle
\leq \mathrm{KL}(p\|\pi_t)-\mathrm{KL}(p\|\pi_{t+1})+\frac{\eta^2 H^2}{2},
\]
where the two $\mathrm{KL}(\pi_{t+1}\|\pi_t)$ terms cancel exactly. This is
the claim.
\end{proof}

\begin{lemma}[Weighted critic error]
\label{lem:critic}
Fix $t\in[T]$ and $h\in[H]$. Under
Assumptions~\ref{asm:bdd},~\ref{asm:real},~\ref{asm:smooth}, and conditional
on $\pi_t$, with probability at least $1-\delta'$ over the sampling of
$\{(s_{h,i},a_{h,i},G_{h,i})\}_{i=1}^{N_h}$,
\[
\mathbb{E}_{(s,a)\sim\nu_h^{\pi_t}}\!\bigl[(\widehat Q_h^t(s,a)-Q_h^{\pi_t}(s,a))^2\bigr]
\leq C\cdot\frac{H^2}{\varepsilon}\cdot\frac{\log(|\mathcal{F}_h|/\delta')}{N_h},
\]
for a universal constant $C>0$. Consequently, by Cauchy--Schwarz,
\[
\mathbb{E}_{(s,a)\sim\nu_h^{\pi_t}}\!\bigl[|\widehat Q_h^t(s,a)-Q_h^{\pi_t}(s,a)|\bigr]
\leq \sqrt{C\cdot\frac{H^2}{\varepsilon}\cdot\frac{\log(|\mathcal{F}_h|/\delta')}{N_h}}.
\]
\end{lemma}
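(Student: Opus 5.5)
The plan is to view the weighted critic fit as least squares under a reweighted distribution. I will then transfer the excess-risk bound back to the on-policy occupancy $\nu_h^{\pi_t}$ through the identity $w_h^t = d_h^{\pi_t}/\mu_h^t$.

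Conditional on $\pi_t$, for each $f\in\mathcal{F}_h$ define the per-sample excess loss
\[
Z_i(f):=w_h^t(s_{h,i})\Bigl[(f(s_{h,i},a_{h,i})-G_{h,i})^2-(Q_h^{\pi_t}(s_{h,i},a_{h,i})-G_{h,i})^2\Bigr].
\]
The samples are i.i.d.\ with $s\sim\mu_h^t$, $a\sim\pi_{t,h}(\cdot\mid s)$, and $\mathbb{E}[G\mid s,a]=Q_h^{\pi_t}(s,a)$. Expanding the square then gives
\[
\mathbb{E}[Z_i(f)] = \mathbb{E}_{\mu_h^t\otimes\pi_{t,h}}\bigl[w(f-Q)^2\bigr] = \mathbb{E}_{\nu_h^{\pi_t}}\bigl[(f-Q_h^{\pi_t})^2\bigr] =: \mathcal{R}(f).
\]
The importance weight changes measure exactly, so the population excess risk is precisely the on-policy squared error we want to bound. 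By realizability (Assumption~\ref{asm:real}) and optimality of $\widehat Q_h^t$ we also have $\sum_i Z_i(\widehat Q_h^t)\le 0$.

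The next step is to establish the Bernstein-type conditions. Writing $Z=w(f-Q)(f+Q-2G)$, Assumption~\ref{asm:bdd} gives $G\in[0,H]$ and $f,Q\in[0,H]$. Hence $|f+Q-2G|\le 2H$, and Assumption~\ref{asm:smooth} gives $w\le 1/\varepsilon$, so $|Z|\le 2H^2/\varepsilon$. For the variance, I would use the self-bounding estimate
\[
\mathbb{E}[Z^2]\le 4H^2\,\mathbb{E}_{\mu}\bigl[w^2 (f-Q)^2\bigr]\le \frac{4H^2}{\varepsilon}\,\mathbb{E}_{\mu}\bigl[w(f-Q)^2\bigr]=\frac{4H^2}{\varepsilon}\,\mathcal{R}(f),
\]
where one factor of $w$ is absorbed into the $1/\varepsilon$ bound. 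This variance-to-mean inequality is the key step: it is what yields a fast $1/N$ rate and places the $1/\varepsilon$ as a multiplicative factor rather than squaring it.

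Finally, I would apply Bernstein's inequality to $\mathcal{R}(f)-\frac1{N_h}\sum_i Z_i(f)$ and take a union bound over the finite class $\mathcal{F}_h$. With probability at least $1-\delta'$, simultaneously for all $f\in\mathcal{F}_h$,
\[
\mathcal{R}(f)\le \frac1{N_h}\sum_i Z_i(f)+\sqrt{\frac{8H^2\mathcal{R}(f)\log(|\mathcal{F}_h|/\delta')}{\varepsilon N_h}}+\frac{4H^2\log(|\mathcal{F}_h|/\delta')}{3\varepsilon N_h}.
\]
Plugging in $f=\widehat Q_h^t$ makes the empirical term nonpositive. This leaves a quadratic inequality in $\sqrt{\mathcal{R}}$, which solves to $\mathcal{R}(\widehat Q_h^t)\le C H^2\log(|\mathcal{F}_h|/\delta')/(\varepsilon N_h)$. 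The $L^1$ consequence then follows from Jensen/Cauchy--Schwarz, $\mathbb{E}|X|\le\sqrt{\mathbb{E}X^2}$.

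The main obstacle I anticipate is the variance step. A naive bound gives $\mathbb{E}[Z^2]\lesssim H^4/\varepsilon^2$, which would produce a slow $1/\sqrt{N}$ rate or an $\varepsilon^{-2}$ dependence. The argument must keep exactly one weight inside the expectation so that it converts back to the $\nu_h^{\pi_t}$ error. A secondary point to check is that the conditioning on $\pi_t$ is legitimate: since fresh data are drawn at each iteration, $\mathcal{F}_h$ is fixed and the sample is independent given $\pi_t$.
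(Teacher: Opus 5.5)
Your argument is correct, and its skeleton matches the paper's proof. The weight $w_h^t=d_h^{\pi_t}/\mu_h^t$ turns the population excess weighted loss into the on-policy error $\mathbb{E}_{\nu_h^{\pi_t}}[(f-Q_h^{\pi_t})^2]$. Realizability of $Q_h^{\pi_t}$ makes the empirical excess loss of $\widehat Q_h^t$ nonpositive. A fast-rate bound for a finite class then finishes, with Cauchy--Schwarz giving the $L^1$ corollary.

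You differ from the paper only in that last step. The paper does not carry it out: it cites standard realizable-case ERM analysis (local or offset Rademacher), with the per-sample loss range $B=H^2/\varepsilon$ as the only quantitative input. You instead prove the rate directly, using Bernstein's inequality and a union bound over $\mathcal{F}_h$, driven by the variance-to-mean estimate $\mathbb{E}[Z^2]\le (4H^2/\varepsilon)\,\mathcal{R}(f)$.

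Your route gives a self-contained, elementary proof. It also makes explicit something the paper leaves implicit: a range bound alone gives only a slow $\sqrt{\log(|\mathcal{F}_h|/\delta')/N_h}$ rate. What licenses both the $1/N_h$ rate and the single power of $1/\varepsilon$ for this weighted loss is the Bernstein-type condition you verify by keeping one factor of $w$ inside the expectation.

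The paper's black-box route is shorter and, through localized complexities, would extend beyond finite critic classes. For the weighted square loss, however, it tacitly relies on the same inequality you establish.
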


\begin{proof}
\emph{Step 1: Reweighting to on-policy loss.}
For any $f\in\mathcal{F}_h$, the weighted population squared loss equals the
on-policy loss to a constant offset:
\[
\mathbb{E}_{s\sim\mu_h^t,a\sim\pi_{t,h},G}\!\bigl[w_h^t(s)(f(s,a)-G)^2\bigr]
=\mathbb{E}_{(s,a)\sim\nu_h^{\pi_t},G}\!\bigl[(f(s,a)-G)^2\bigr],
\]
because $w_h^t(s)=d_h^{\pi_t}(s)/\mu_h^t(s)$ exactly cancels the change of state measure, while $a\mid s\sim\pi_{t,h}$ and the conditional distribution of $G$ given $(s,a)$ are the same under both samplings.

\emph{Step 2: Realizability and noise structure.}
By construction, $G_{h,i}$ is a return-to-go obtained by rolling out $\pi_t$
from $(s_{h,i},a_{h,i})$, so
\[
\mathbb{E}[G_{h,i}\mid s_{h,i},a_{h,i}]=Q_h^{\pi_t}(s_{h,i},a_{h,i}),
\qquad
G_{h,i}\in[0,H].
\]
By Assumption~\ref{asm:real}, $Q_h^{\pi_t}\in\mathcal{F}_h$, so $Q_h^{\pi_t}$
minimizes the on-policy population squared loss over $\mathcal{F}_h$.

\emph{Step 3: Weighted ERM fast rate.}
The per-sample weighted squared loss is bounded:
$w_h^t(s)(f(s,a)-G)^2\in[0,H^2/\varepsilon]$ by
Assumption~\ref{asm:smooth} and Assumption~\ref{asm:bdd}. Standard
realizable-case ERM analysis (e.g.\ Bartlett--Mendelson local Rademacher, or
the offset-Rademacher bound of Liang--Rakhlin--Sridharan 2015) for finite
classes then gives
\[
\mathbb{E}_{(s,a)\sim\nu_h^{\pi_t}}\!\bigl[(\widehat Q_h^t(s,a)-Q_h^{\pi_t}(s,a))^2\bigr]
\leq C\cdot\frac{B\log(|\mathcal{F}_h|/\delta')}{N_h},
\]
with $B=H^2/\varepsilon$ the per-sample loss range. The Cauchy--Schwarz step
$\mathbb{E}|X|\leq\sqrt{\mathbb{E}X^2}$ then converts $L^2$ to $L^1$.
\end{proof}

\subsection{Proof of Theorem~\ref{thm:svmd}}

\paragraph{Step 1: Per-step bound, telescoping.}
Apply Lemma~\ref{lem:mirror} with $p=\mu_h(\cdot\mid s)$, take expectation
over $s\sim d_h^\mu$, and sum over $h\in[H]$ and $t\in[T]$. The KL terms
telescope in $t$:
\[
\eta\sum_{t=1}^T\sum_{h=1}^H \mathbb{E}_{s\sim d_h^\mu}\!\bigl[\langle\widehat Q_h^t(s,\cdot),\mu_h-\pi_{t,h}\rangle\bigr]
\leq D_\mu+\frac{T\eta^2 H^3}{2},
\]
since each per-step term contributes $\eta^2H^2/2$ and there are $TH$ of them. Dividing by $\eta T$,
\begin{equation}
\frac{1}{T}\sum_{t=1}^T\sum_{h=1}^H \mathbb{E}_{s\sim d_h^\mu}\!\bigl[\langle\widehat Q_h^t(s,\cdot),\mu_h-\pi_{t,h}\rangle\bigr]
\leq \frac{D_\mu}{\eta T}+\frac{\eta H^3}{2}.
\label{eq:opt-avg}
\end{equation}

\paragraph{Step 2: Performance difference and critic error.}
By Lemma~\ref{lem:pdl} and adding/subtracting $\widehat Q_h^t$,
\[
J(\mu)-J(\pi_t)
=\sum_{h=1}^H \mathbb{E}_{s\sim d_h^\mu}\!\bigl[\langle\widehat Q_h^t(s,\cdot),\mu_h-\pi_{t,h}\rangle\bigr]
+\sum_{h=1}^H\mathcal{E}_{h,t},
\]
where, with $e_{h,t}(s,a):=\widehat Q_h^t(s,a)-Q_h^{\pi_t}(s,a)$,
\[
\mathcal{E}_{h,t}:=\mathbb{E}_{s\sim d_h^\mu}\!\bigl[\langle Q_h^{\pi_t}(s,\cdot)-\widehat Q_h^t(s,\cdot),\mu_h-\pi_{t,h}\rangle\bigr].
\]

\paragraph{Step 3: Coverage transfers $\mathcal{E}_{h,t}$ to on-policy $L^1$.}
By the triangle inequality,
\[
|\mathcal{E}_{h,t}|
\leq \bigl|\mathbb{E}_{s\sim d_h^\mu,a\sim\mu_h}[e_{h,t}(s,a)]\bigr|
+\bigl|\mathbb{E}_{s\sim d_h^\mu,a\sim\pi_{t,h}}[e_{h,t}(s,a)]\bigr|.
\]
Assumption~\ref{asm:cov} gives $\nu_h^\mu\leq\kappa\nu_h^{\pi_t}$, hence
\[
\bigl|\mathbb{E}_{s\sim d_h^\mu,a\sim\mu_h}[e_{h,t}]\bigr|
\leq \kappa\,\mathbb{E}_{(s,a)\sim\nu_h^{\pi_t}}[|e_{h,t}|].
\]
Summing $\nu_h^\mu(s,a)\leq\kappa\nu_h^{\pi_t}(s,a)$ over $a$ also gives
$d_h^\mu\leq\kappa d_h^{\pi_t}$ pointwise, so
\[
\bigl|\mathbb{E}_{s\sim d_h^\mu,a\sim\pi_{t,h}}[e_{h,t}]\bigr|
\leq \kappa\,\mathbb{E}_{(s,a)\sim\nu_h^{\pi_t}}[|e_{h,t}|].
\]
Therefore
\begin{equation}
|\mathcal{E}_{h,t}|\leq 2\kappa\,\mathbb{E}_{(s,a)\sim\nu_h^{\pi_t}}[|e_{h,t}(s,a)|].
\label{eq:cov-transfer}
\end{equation}

\paragraph{Step 4: Apply Lemma~\ref{lem:critic} with a union bound.}
Apply Lemma~\ref{lem:critic} with confidence $\delta/(HT)$ for each $(t,h)$.
A union bound over the $HT$ such events implies that, with probability at
least $1-\delta$, simultaneously for all $t,h$,
\[
\mathbb{E}_{(s,a)\sim\nu_h^{\pi_t}}[|e_{h,t}(s,a)|]\leq \beta_h.
\]
Combined with~\eqref{eq:cov-transfer},
$|\mathcal{E}_{h,t}|\leq 2\kappa\beta_h$ on this event.

\paragraph{Step 5: Assemble.}
On the high-probability event of Step 4,
\[
J(\mu)-J(\pi_t)\leq\sum_{h=1}^H\mathbb{E}_{s\sim d_h^\mu}\!\bigl[\langle\widehat Q_h^t,\mu_h-\pi_{t,h}\rangle\bigr]+2\kappa\sum_{h=1}^H\beta_h.
\]
Averaging over $t\in[T]$ and using
$J(\bar\pi_T)=\frac1T\sum_{t=1}^T J(\pi_t)$ (since $\bar\pi_T$ is the uniform
mixture), then plugging in~\eqref{eq:opt-avg},
\[
J(\mu)-J(\bar\pi_T)\leq\frac{D_\mu}{\eta T}+\frac{\eta H^3}{2}+2\kappa\sum_{h=1}^H\beta_h.
\]
The choice $\eta=\sqrt{2D_\mu/(H^3 T)}$ minimizes the first two terms,
giving the second form of the bound. \hfill$\square$
\section{Experiments}
\label{app:experiments}

This appendix provides additional details for the experiments in Section~4. 
The purpose of these experiments is to test whether variance-driven state resampling is most useful in the regime predicted by our analysis: tasks in which rollouts from the initial state often produce uniform rewards, but intermediate states along successful or partially successful trajectories can still produce mixed continuation outcomes. 
Across all experiments, VSR changes only the state from which rollout groups are initialized; the underlying policy-gradient objective, reward function, and evaluation protocol are otherwise kept the same as the corresponding baseline.

\subsection{Toy gridworld}
\label{app:toy-gridworld}

We use the gridworld experiment in Section~2.3 as a controlled setting where both the on-policy state-visitation distribution $d^{\pi_t}$ and the within-state reward standard deviation $\sigma_{\pi_t}$ can be computed exactly. 
The environment is a $7 \times 7$ gridworld with fixed obstacles, a start state $S$, and a goal state $G$. 
The agent receives a binary terminal reward $R \in \{0,1\}$ indicating whether it reaches the goal. 
The policy is a one-hidden-layer MLP with 32 hidden units and tanh activations, followed by a four-action output layer.

At each training iteration, we enumerate the finite state space to compute the current occupancy distribution $d^{\pi_t}(s)$ and the continuation success probability $p_t(s)$ from every reachable state. 
We then compute
\[
    \sigma_{\pi_t}(s) = \sqrt{p_t(s)(1-p_t(s))}
\]
and form the variance-driven proposal
\[
    \mu_t(s) \propto d^{\pi_t}(s)\sigma_{\pi_t}(s).
\]
We compare policy-gradient updates that sample states from the standard on-policy proposal $d^{\pi_t}$ against updates that sample from $\mu_t$. 
This experiment is intended as a diagnostic visualization of the variance-optimal proposal rather than as a benchmark of practical replay heuristics: unlike the LLM and agent experiments, no approximation to either $d^{\pi_t}$ or $\sigma_{\pi_t}$ is used.


\subsection{Mathematical reasoning}
\label{app:math-experiments}

\paragraph{Training data.}
For mathematical reasoning, we train on a hard subset of OpenR1-Math-220k. 
The subset contains 1K prompts selected to have low pass rate under the base policy. 
We filter for prompts with pass@32 at most 1 using Qwen2.5-7B-Instruct. 
This filtering focuses training on problems for which prompt-level on-policy rollouts are usually uninformative, creating a setting where intermediate successful prefixes can provide useful continuation states.

\paragraph{Models and baselines.}
We run experiments with Qwen2.5-7B-Instruct and Llama-3.2-3B-Instruct. 
For each model, we compare the base model, supervised fine-tuning on expert trajectories, standard GRPO, and VSR+GRPO. 
The SFT baseline uses the same successful expert trajectories that are used to initialize the VSR replay buffer, so that the comparison separates direct imitation of expert completions from using expert prefixes only as reset states for RL rollouts. All experiments are performed on 4 A100 GPUs.

\paragraph{VSR initialization.}
In the math experiments, VSR uses cold-start initialization because successful trajectories are rare under the base policy. 
We seed the replay buffer with prefix states extracted from successful DeepSeek-R1 trajectories. 
A seeded prefix consists of the original prompt concatenated with a partial reasoning trace. 
During VSR training, the prefix tokens are treated as fixed context and are masked out of the policy-gradient loss; only newly generated continuation tokens contribute to the GRPO objective. 
As training proceeds, the replay buffer is updated with successful prefixes generated by the current policy, and recency-based eviction gradually replaces expert-seeded prefixes with on-policy states. For every prompt, we only keep the most recent single successful trajectory.

\paragraph{Replay construction and scoring.}
For every successful fresh trajectory $\tau$ generated during training, we insert prefix states $s_i=(x,y_{<i})$ into the replay buffer. 
Each inserted prefix receives the zero-cost variance score
\[
    \hat{\sigma}(s_i)
    =
    \sqrt{\hat{p}_i(1-\hat{p}_i)},
    \qquad
    \hat{p}_i=(1-\alpha_i)p_0(x)+\alpha_i R(\tau),
    \qquad
    \alpha_i=i/|\tau|.
\]
Here $p_0(x)$ is the empirical success rate of the rollout group generated from the original prompt, and $R(\tau)$ is the terminal reward of the trajectory from which the prefix was extracted. 
At each update, a fraction $\rho$ of batch elements is drawn from the replay buffer with probability proportional to $\hat{\sigma}$, while the remaining batch elements are fresh prompts from the training set. 
For replayed prefixes, we generate a fresh GRPO rollout group from the prefix using the current policy.

\begin{figure}[h]
    \centering
    \includegraphics[width=0.4\linewidth]{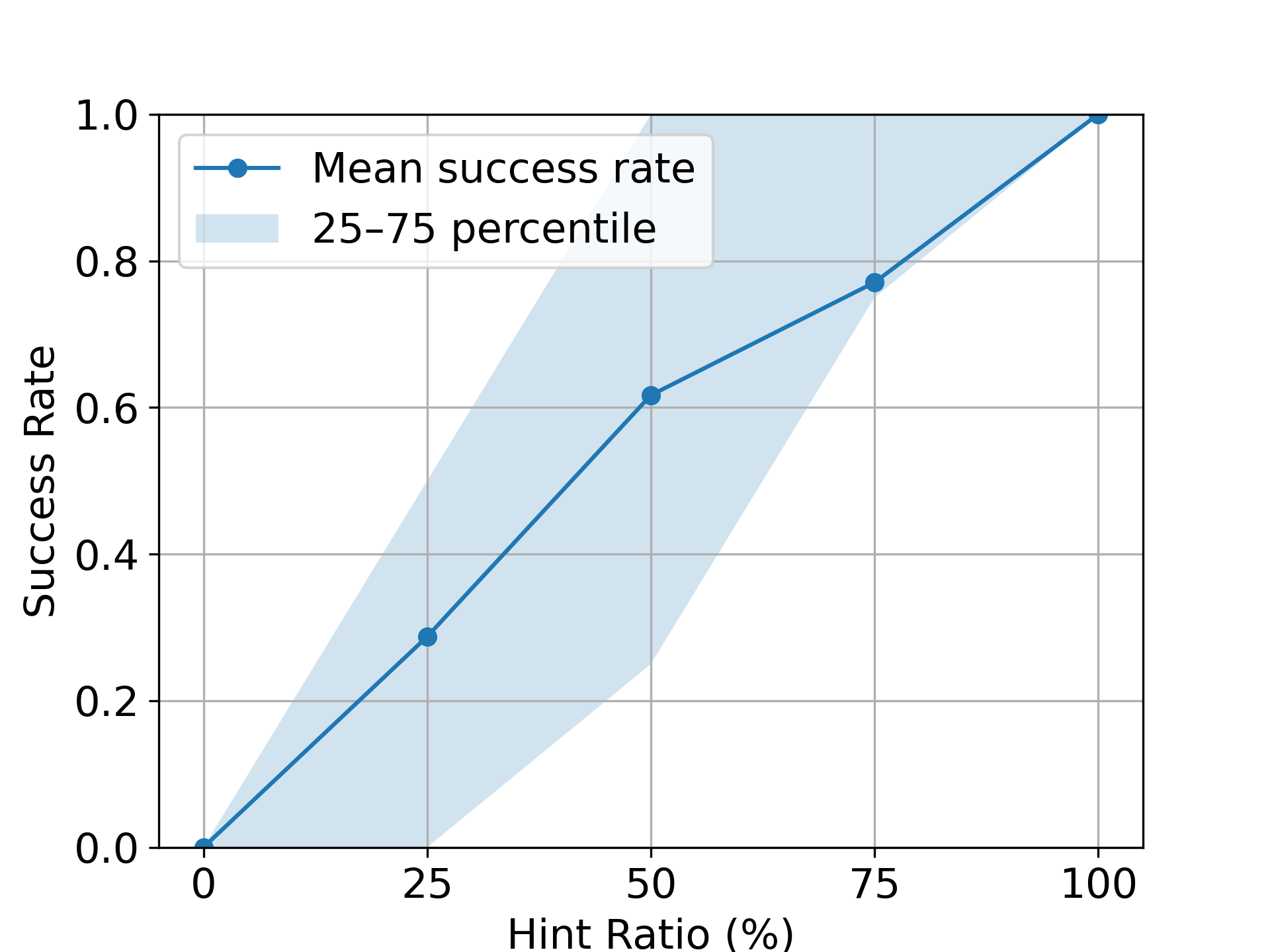}
    \caption{\textbf{Empirical continuation success rate as a function of hint ratio.} For each of the 1K hard prompts in the training subset, we truncate the corresponding successful DeepSeek-R1 trajectory to a fraction $\alpha \in \{0\%, 25\%, 50\%, 75\%, 100\%\}$ of its length and sample 8 continuations under Qwen2.5-7B-Instruct. The blue curve shows the mean per-prompt success rate across the 1K prompts; the shaded band shows the 25--75 percentile across prompts. The mean is approximately linear in $\alpha$, supporting the linear interpolation used in $\hat{p}_t(s_i \mid \tau) = (1 - \alpha_i)\,p_0(x) + \alpha_i\, R(\tau)$.}
    \label{fig:hint_ratio}
\end{figure}

\paragraph{Empirical justification of the linear interpolation.}
The score $\hat{\sigma}_t$ in Algorithm~\ref{alg:vsr} estimates the continuation success probability at an intermediate prefix $s_i$ by linearly interpolating between the prompt-level rollout success rate $p_0(x)$ and the terminal reward $R(\tau)$ of the trajectory the prefix is drawn from. We adopt the linear form because it is the simplest functional form consistent with the boundary conditions $\hat{p}_0 = p_0(x)$ and $\hat{p}_{|\tau|} = R(\tau)$, and we did not tune it. To check that this choice is not grossly mis-specified in our training setting, we measure the empirical continuation success rate as a function of the hint ratio $\alpha = i / |\tau|$. Concretely, for each of the 1K hard prompts in our training subset---on which the base policy's pass rate is essentially zero---we take the corresponding successful DeepSeek-R1 trajectory used to seed the replay buffer, truncate it to $\alpha \in \{0\%, 25\%, 50\%, 75\%, 100\%\}$ of its length, and sample 8 continuations under Qwen2.5-7B-Instruct from each truncation. Figure~\ref{fig:hint_ratio} reports the mean and 25--75 percentile of the resulting per-prompt success rate across the 1K prompts. The mean success rate rises approximately linearly in $\alpha$ from $\approx 0$ at $\alpha = 0$ to $\approx 1$ at $\alpha = 100\%$, with no strong concavity or saturation that would favor a sigmoid or quadratic interpolant. The interquartile spread is widest near the middle of the trajectory, which is exactly the regime where $\sigma_\pi(s)$ is largest under the proxy and where VSR is designed to concentrate sampling mass. We therefore retain the linear form throughout; tighter low-cost surrogates remain a natural target for future work.

\paragraph{Evaluation.}
We evaluate final models on AIME24, AIME25, AMC23, OlympiadBench, MATH500, and Minerva. 
For each benchmark, we report accuracy under the same decoding and evaluation protocol across all methods. We sample 16 rollouts over 2 different seeds for all the prompts under \texttt{temperature=0.7} and report the mean results.
In addition to final benchmark accuracy, we track training reward, validation accuracy, and response length over training. 
Response length is included because VSR can reduce the number of newly generated tokens per update by starting a fraction of rollouts from mid-trajectory prefixes.

\begin{table}[t]
\centering
\caption{Hyperparameters for mathematical reasoning experiments.}
\label{tab:math-hparams}
\begin{tabular}{lll}
\toprule
Hyperparameter & Qwen2.5-7B-Instruct & Llama-3.2-3B-Instruct \\
\midrule
Replay ratio $\rho$ & 0.5 & 0.5 \\
Training steps & 360 & 360 \\
Max Prompt Length & 2048 & 2048 \\
Max Response Length & 4096 & 4096 \\
Rollout Group Size & 8 & 8 \\
Training batch size & 128 & 128 \\
Mini batch size & 64 & 64 \\
Learning rate & $1\times 10^{-6}$ & $1\times 10^{-6}$ \\
KL coefficient & 0 & 0 \\
\bottomrule
\end{tabular}
\end{table}

\subsection{Long-horizon agentic tasks}
\label{app:agentic-experiments}

\paragraph{Environments.}
We evaluate VSR on both Sokoban and BabyAI. 
These environments require multi-step decision making and produce sparse terminal rewards, making them a natural testbed for state-level resampling. 
For BabyAI, we use the SynthLoc, OneRoom, and MiniBoss levels. 
The policy observes a serialized interaction history consisting of the task instruction, previous actions, environment feedback, and current observations. 
A replay state is therefore represented as a prefix of this serialized trajectory, analogous to a token prefix in the math experiments.

\paragraph{Models and baselines.}
We use Qwen2.5-1.5B-Instruct and Qwen2.5-3B-Instruct. 
The base RL algorithm is GiGPO, and VSR is added as a plug-in state-resampling component. 
We compare the base model, GiGPO, and VSR+GiGPO under matched model, environment, reward, and training settings. 
Unlike the math experiments, we do not use expert successful trajectories for cold-start initialization. 
The replay buffer is populated entirely from trajectories generated by the current policy. All experiments are performed on a single H200 GPU.

\paragraph{VSR in agentic environments.}
At each training iteration, the policy generates rollout groups either from the initial environment state or from replayed intermediate states. 
When a successful trajectory is observed, we insert its intermediate serialized states into the replay buffer. 
Each replayed state includes all information needed to resume interaction from that point, including prior actions and environment observations. 
As in the math setting, prefix tokens or history tokens preceding the replay point are treated as fixed context and are masked from the policy-gradient loss. 
The model is optimized only on newly generated actions after the replay state.

Prefix scores are computed with the same zero-cost interpolation rule used in the math experiments. 
For a prefix at normalized position $\alpha_i$ along a successful trajectory, the estimated continuation success probability interpolates between the rollout-group success rate from the initial state and the realized terminal reward of the trajectory. 
Replay prefixes are sampled in proportion to the resulting $\hat{\sigma}$ score. 
The recency window is used to keep the replay buffer aligned with the current policy, preventing the proposal from becoming dominated by states collected under earlier, weaker policies.

\paragraph{Evaluation.}
For Sokoban and BabyAI, we report task success rate. For Sokoban, we sample 128 times over a fixed set of map seeds and report the average success rate. For each level of Baby AI, we sample 128 times over a fixed set of map seeds and report the average success rate. The environment hyperparameters remain the same as what we use in training.
For training dynamics, we additionally track training reward, validation success rate, and episode length. 
Episode length is reported because state replay changes the average number of newly generated steps per rollout: when training starts from intermediate states, the model can receive useful gradient signal while generating fewer new actions.

\begin{table}[t]
\centering
\caption{Hyperparameters for agentic-task experiments.}
\label{tab:agentic-hparams}
\begin{tabular}{lll}
\toprule
Hyperparameter & Qwen2.5-1.5B-Instruct & Qwen2.5-3B-Instruct \\
\midrule
Replay ratio & 0.3 & 0.3 \\
Recency window & 5 & 5 \\
Max prompt length & 2048 & 2048 \\
Max response length & 256 & 256 \\
Rollout group size & 8 & 8 \\
Training batch size & 32 & 32 \\
Learning rate & $1\times 10^{-6}$ & $1\times 10^{-6}$ \\
KL coefficient & 0 & 0 \\
\bottomrule
\end{tabular}
\end{table}

\subsection{Ablation details}
\label{app:ablation-details}

We conduct ablations on BabyAI using the same training and evaluation setup as the main agentic-task experiments. 
All ablation variants are built on top of GiGPO and differ only in the replay component.

\paragraph{Uniform replay.}
The variant VSR w/o scoring keeps the same replay buffer construction and recency-based updates as VSR, but samples prefixes uniformly from the buffer rather than in proportion to $\hat{\sigma}$. 
This ablation tests whether the variance-driven score contributes beyond the benefit of simply restarting rollouts from previously visited successful prefixes.

\paragraph{Static replay buffer.}
The variant VSR w/o recency keeps $\hat{\sigma}$-weighted prefix sampling but disables recency-based buffer updates. 
The buffer is populated early in training and then held fixed. 
This ablation tests whether tracking the current policy's state-visitation distribution is necessary for sustained learning. 
A static buffer can be useful early in training, but as the policy improves, its state distribution changes; without recency updates, replayed prefixes may no longer approximate the current $d^{\pi_t}$ factor in the variance-driven proposal.

\paragraph{Full VSR.}
Full VSR combines both components: recent successful-prefix replay approximates the current state-visitation factor, while $\hat{\sigma}$-weighted sampling prioritizes prefixes whose continuations are estimated to have high residual uncertainty. 
The ablations therefore isolate the two practical approximations to the proposal $\mu_{\mathrm{var}}^\pi \propto d^\pi \sigma_\pi$.


\subsection{Discussion-period controls and validations}
\label{app:rebuttal-controls}

\begin{table}[h]
\centering
\caption{Additional mathematical-reasoning controls on Llama-3.2-3B-Instruct. These are single training runs; the original benchmark evaluation protocol is otherwise unchanged.}
\label{tab:math-controls-appendix}
\scriptsize
\begin{tabular}{lccccccc}
\toprule
\textbf{Method} & \textbf{AIME24} & \textbf{AIME25} & \textbf{AMC23} & \textbf{Olympiad} & \textbf{MATH500} & \textbf{Minerva} & \textbf{Avg.} \\
\midrule
GRPO & 4.4 & 1.0 & 18.7 & 12.6 & 39.2 & 16.5 & 15.4 \\
Reinforce-Ada-Est & 4.2 & 0.1 & 18.0 & 12.4 & 39.9 & 15.4 & 15.0 \\
Static uniform expert-prefix resets & 5.6 & 0.5 & 22.7 & 14.1 & 43.7 & 15.8 & 17.1 \\
VSR & 6.2 & 0.6 & 20.4 & 15.3 & 54.0 & 17.8 & 19.1 \\
\bottomrule
\end{tabular}
\end{table}

\paragraph{Static expert-prefix reset control.}
For Llama-3.2-3B-Instruct, we added a control using the same successful expert-trajectory source as VSR but sampling uniformly from a fixed expert-prefix pool and never adding prefixes generated by the current policy. The benchmark results (AIME24/AIME25/AMC23/OlympiadBench/MATH500/Minerva/Avg.) are 4.4/1.0/18.7/12.6/39.2/16.5/15.4 for GRPO, 5.6/0.5/22.7/14.1/43.7/15.8/17.1 for static uniform expert-prefix resets, and 6.2/0.6/20.4/15.3/54.0/17.8/19.1 for VSR. Thus privileged reset states themselves explain a material part of the gain. Because full VSR also adds online policy-generated replay, recency updates, and priority-weighted sampling, the remaining difference cannot be attributed specifically to the priority score. This is one training run per method.

\paragraph{Prompt-level adaptive allocation.}
We also compare with Reinforce-Ada-Est on Llama-3.2-3B-Instruct. Each update uses 128 prompts and exactly $128\times8=1024$ generated responses. Reinforce-Ada-Est reallocates these responses across prompts subject to $N_x\in[4,32]$ and $\sum_x N_x=1024$, so its average group size is 8, matching GRPO and VSR. It uses no expert trajectories, expert prefixes, or intermediate reset states; every rollout begins from the original problem prompt. The training subset, number of updates, optimization hyperparameters, and evaluation protocol are otherwise matched. Under this matched response budget, its benchmark results are 4.2/0.1/18.0/12.4/39.9/15.4 (Avg. 15.0), compared with GRPO at 15.4 and VSR at 19.1. This supports the narrower conclusion that prompt-level response reallocation alone does not reproduce the full VSR result in this hard-prompt setting; it does not isolate state-level allocation as the sole source of the difference, and it is a single run.

\paragraph{Independent-run scoring ablation.}
On Qwen2.5-1.5B BabyAI, the three uniform-replay runs are 46.1, 39.1, and 46.9 (mean $44.0\pm4.3$ SD), while the three VSR runs are 47.6, 50.8, and 46.1 (mean $48.2\pm2.4$ SD). We report all runs and do not perform formal significance testing.

\begin{table}[h]
\centering
\caption{Independent-run BabyAI scoring ablation on Qwen2.5-1.5B. Values are average success rates (\%).}
\label{tab:seed-ablation-appendix}
\small
\begin{tabular}{lcccc}
\toprule
\textbf{Method} & \textbf{Run 1} & \textbf{Run 2} & \textbf{Run 3} & \textbf{Mean $\pm$ SD} \\
\midrule
Uniform replay & 46.1 & 39.1 & 46.9 & $44.0\pm4.3$ \\
VSR & 47.6 & 50.8 & 46.1 & $48.2\pm2.4$ \\
\bottomrule
\end{tabular}
\end{table}

\paragraph{Failed-prefix analysis and same-checkpoint continuations.}
For failed trajectories, the mean continuation success rate decreases as the prefix becomes longer (0.396, 0.260, and 0.130 at the three tested prefix positions), while the fraction of continuation groups with no successful rollout increases from 26\% to 42\% to 62\%. For successful trajectories from the same problems, the corresponding all-fail rates are 0\%, 2\%, and 4\%. At the latest tested failed-prefix position, 36\% of groups still contain mixed outcomes, so this evidence does not imply that every failed prefix is unrecoverable; it supports successful-only filtering as a budget-allocation choice in the sparse-reward regime.

\paragraph{Statewise validation of the priority score.}
We directly validate the implemented score on MATH-500. For each policy, we select 50 problems with mixed initial outcomes, retain one self-generated successful trajectory per problem, extract prefixes at 25\%, 50\%, and 75\% of the response, and generate 16 new continuations from each prefix, yielding 150 prefix states per policy. For Qwen2.5-7B-Instruct, predicted versus empirical continuation standard deviation has pooled Pearson correlation 0.441 [95\% CI: 0.277, 0.591] and Spearman correlation 0.433 [0.270, 0.585]; Pearson correlations at the three fixed prefix positions are 0.446/0.387/0.342 and Spearman correlations are 0.494/0.305/0.329. For an intermediate training checkpoint, pooled Pearson is 0.484 [0.291, 0.641] and pooled Spearman is 0.499 [0.314, 0.642], with within-position Pearson 0.587/0.432/0.307 and Spearman 0.620/0.424/0.315. These moderate positive associations, including within fixed prefix positions, show that the score contains state-dependent information beyond prefix position alone. The validation uses selected successful MATH-500 states rather than the full online replay-buffer distribution, and the score remains an uncalibrated heuristic rather than an unbiased or consistent variance estimator.

\paragraph{Generated-token accounting.}
Across all model calls in the agentic experiments, Qwen2.5-1.5B VSR+GiGPO generates 39.27M tokens over 150 matched updates compared with 81.30M for GiGPO; Qwen2.5-3B generates 34.43M versus 36.26M over 100 matched updates. These totals establish lower generated-token usage at matched update counts. We do not claim lower wall-clock cost or performance superiority under an exactly matched total-token budget.




\end{document}